\documentclass{article}

\usepackage{nips14submit_e,times}
\usepackage{color}
\usepackage{algorithm, algorithmic}
\usepackage{ mathrsfs }
\usepackage{ dsfont }
\usepackage{lmodern}
\usepackage{array}
\usepackage{bbm}
\usepackage{amsmath}
\usepackage{amssymb}
\usepackage[mathscr]{eucal}
\usepackage{graphicx}
\usepackage{mathrsfs}
\usepackage{psfrag}
\usepackage{color}
\usepackage{here}
\usepackage{wasysym}
\usepackage{enumitem}
\usepackage{xcolor}
\usepackage{caption}
\usepackage{subcaption}
\usepackage{amsthm}
\usepackage{lipsum}
\newtheorem{theorem}{Theorem}
\newtheorem{corollary}{Corollary}
\newtheorem{lemma}{Lemma}
\newtheorem{mydef}{Definition}

\newcommand{\Exp}{\mathds{E}}

\newcommand{\Prob}{\mathds{P}}
\newcommand{\Real}{\mathds{R}}
\newcommand{\Nat}{\mathbb{N}}
\newcommand{\Ind}{\mathds{1}}

\newcommand{\Xc}{\mathcal{X}}
\newcommand{\Yc}{\mathcal{Y}}
\newcommand{\Pc}{\mathcal{P}}
\newcommand{\Qc}{\mathcal{Q}}
\newcommand{\Fc}{\mathcal{F}}
\newcommand{\Gc}{\mathcal{G}}
\newcommand{\Rc}{\mathcal{R}}
\newcommand{\Sc}{\mathcal{S}}
\newcommand{\Ac}{\mathcal{A}}
\newcommand{\Mc}{\mathcal{M}}
\newcommand{\Tc}{\mathcal{T}}


\title{Near-optimal Reinforcement Learning \\ in Factored MDPs}

\author{
Ian Osband \\
Stanford University\\
\texttt{iosband@stanford.edu} \\
\And
Benjamin Van Roy \\
Stanford University \\
\texttt{bvr@stanford.edu}
}

\nipsfinalcopy

\begin{document}
\maketitle

\begin{abstract}
Any reinforcement learning algorithm that applies to all Markov decision processes (MDPs) will suffer $\Omega(\sqrt{SAT})$ regret on some MDP, where $T$ is the elapsed time and $S$ and $A$ are the cardinalities of the state and action spaces.
This implies $T = \Omega(SA)$ time to guarantee a near-optimal policy.
In many settings of practical interest, due to the curse of dimensionality, $S$ and $A$ can be so enormous that this learning time is unacceptable.
We establish that, if the system is known to be a \emph{factored} MDP, it is possible to achieve regret that scales polynomially in the number of \emph{parameters} encoding the factored MDP, which may be exponentially smaller than $S$ or $A$.
We provide two algorithms that satisfy near-optimal regret bounds in this context:
posterior sampling reinforcement learning (PSRL) and an upper confidence bound algorithm (UCRL-Factored).
\end{abstract}

\section{Introduction}

We consider a reinforcement learning agent that takes sequential actions within an uncertain environment with an aim to maximize cumulative reward \cite{burnetas1997optimal}.
We model the environment as a Markov decision process (MDP) whose dynamics are not fully known to the agent.
The agent can learn to improve future performance by exploring poorly-understood states and actions, but might improve its short-term rewards through a policy which exploits its existing knowledge.
Efficient reinforcement learning balances exploration with exploitation to earn high cumulative reward.

The vast majority of efficient reinforcement learning has focused upon the {\it tabula rasa} setting, where little prior knowledge is available about the environment beyond its state and action spaces.
In this setting several algorithms have been designed to attain sample complexity polynomial in the number of states $S$ and actions $A$ \cite{kearns2002near, brafman2003r}.
Stronger bounds on regret, the difference between an agent's cumulative reward and that of the optimal controller, have also been developed.
The strongest results of this kind establish $\tilde{O}(S\sqrt{AT})$ regret for particular algorithms \cite{jaksch2010near,bartlett2009regal,osband2013more} which is close to the lower bound $\Omega(\sqrt{SAT})$ \cite{jaksch2010near}.
However, in many setting of interest, due to the curse of dimensionality, $S$ and $A$ can be so enormous that even this level of regret is unacceptable.

In many practical problems the agent \emph{will} have some prior understanding of the environment beyond {\it tabula rasa}.
For example, in a large production line with $m$ machines in sequence each with $K$ possible states, we may know that over a single time-step each machine can only be influenced by its direct neighbors.
Such simple observations can reduce the dimensionality of the learning problem exponentially, but cannot easily be exploited by a {\it tabula rasa} algorithm.
Factored MDPs (FMDPs) \cite{boutilier2000stochastic}, whose transitions can be represented by a dynamic Bayesian network (DBN) \cite{ghahramani1998learning}, are one effective way to represent these structured MDPs compactly.

Several algorithms have been developed that exploit the known DBN structure to achieve sample complexity polynomial in the \emph{parameters} of the FMDP, which may be exponentially smaller than $S$ or $A$ \cite{strehl2007model,kearns1999efficient,szita2009optimistic}.
However, these polynomial bounds include several high order terms.
We present two algorithms, UCRL-Factored and PSRL, with the first near-optimal regret bounds for factored MDPs.
UCRL-Factored is an optimistic algorithm that modifies the confidence sets of UCRL2 \cite{jaksch2010near} to take advantage of the network structure.
PSRL is motivated by the old heuristic of Thompson sampling \cite{thompson1933} and has been previously shown to be efficient in non-factored MDPs \cite{strens2000bayesian,osband2013more}.
These algorithms are descibed fully in Section \ref{sec: algos}.

Both algorithms make use of approximate FMDP planner in internal steps.
However, even where an FMDP can be represented concisely, solving for the optimal policy may take exponentially long in the most general case \cite{guestrin2003efficient}.
Our focus in this paper is upon the statistical aspect of the learning problem and like earlier discussions we do not specify which computational methods are used \cite{kearns1999efficient}.
Our results serve as a reduction of the reinforcement learning problem to finding an approximate solution for a given FMDP.
In many cases of interest, effective approximate planning methods for FMDPs do exist.
Investigating and extending these methods are an ongoing subject of research \cite{koller2000policy, guestrin2001max,delgado2011efficient,sanner2012approximate}.

\section{Problem formulation}

We consider the problem of learning to optimize a random finite horizon MDP $M = (\Sc, \Ac, R^M, P^M, \tau, \rho)$ in repeated finite episodes of interaction.
$\mathcal{S}$ is the state space, $\mathcal{A}$ is the action space, $R^M(s,a)$ is the reward distibution over $\Real$ in state $s$ with action $a$, $P^M(\cdot|s,a)$ is the transition probability over $\Sc$ from state $s$ with action $a$, $\tau$ is the time horizon, and $\rho$ the initial state distribution.
We define the MDP and all other random variables we will consider with respect to a probability space $(\Omega, \mathcal{F}, \mathbb{P})$.

A deterministic policy $\mu$ is a function mapping each state $s \in \Sc$ and $i = 1,\ldots,\tau$ to an action $a \in \Ac$.
For each MDP $M = (\mathcal{S}, \mathcal{A}, R^M, P^M, \tau, \rho)$ and policy $\mu$, we define a value function
$$V^{M}_{\mu, i}(s) := \Exp_{M,\mu}\left[ \sum_{j=i}^{\tau} \overline{R}^M(s_j,a_j) \Big| s_i = s \right],$$
where $\overline{R}^M(s,a)$ denotes the expected reward realized when action $a$ is selected while in state $s$, and the subscripts of the expectation operator indicate that $a_j = \mu(s_j, j)$, and $s_{j+1} \sim P^M(\cdot| s_j, a_j)$ for $j = i, \ldots, \tau$.
A policy $\mu$ is optimal for the MDP $M$ if $V^{M}_{\mu, i}(s) = \max_{\mu'} V^{M}_{\mu', i}(s)$ for all $s \in \Sc$ and $i=1,\ldots,\tau$. We will associate with each MDP $M$ a policy $\mu^M$ that is optimal for $M$.

The reinforcement learning agent interacts with the MDP over episodes that begin at times $t_k = (k-1) \tau + 1$, $k=1,2,\ldots$.
At each time $t$, the agent selects an action $a_t$, observes a scalar reward $r_t$, and then transitions to $s_{t+1}$.
Let $H_t = (s_1,a_1,r_1,\ldots,s_{t-1},a_{t-1},r_{t-1})$ denote the history of observations made \emph{prior} to time $t$.
A reinforcement learning algorithm is a deterministic sequence $\{\pi_k | k = 1, 2, \ldots\}$ of functions, each mapping $H_{t_k}$ to a probability distribution $\pi_{k}(H_{t_k})$ over policies which the agent will employ during the $k$th episode.
We define the regret incurred by a reinforcement learning algorithm $\pi$ up to time $T$ to be:
$${\rm Regret}(T, \pi, M^*) := \sum_{k=1}^{\lceil T/\tau \rceil} \Delta_k,$$
where $\Delta_k$ denotes regret over the $k$th episode, defined with respect to the MDP $M^*$ by
$$\Delta_k := \sum_{\Sc} \rho(s) (V^{M^*}_{\mu^*, 1}(s) - V^{M^{*}}_{\mu_k, 1}(s))$$
with $\mu^* = \mu^{M^*}$ and $\mu_{k}\sim \pi_{k}(H_{t_k})$. Note that regret is not deterministic since it can depend on the random MDP $M^*$, the algorithm's internal random sampling and, through the history $H_{t_k}$, on previous random transitions and random rewards. We will assess and compare algorithm performance in terms of regret and its expectation.

\section{Factored MDPs}

Intuitively a factored MDP is an MDP whose rewards and transitions exhibit some conditional independence structure.
To formalize this definition we must introduce some more notation common to the literature \cite{szita2009optimistic}.

\begin{mydef}[Scope operation for factored sets $\Xc = \Xc_1 \times .. \times \Xc_n$]
\hspace{0.000000001mm} \newline
For any subset of indices $Z \subseteq \{1,2,..,n\}$ let us define the scope set $\Xc[Z] := \bigotimes\limits_{i \in Z} \Xc_i$.
Further, for any $x \in \Xc$ define the scope variable $x[Z] \in \Xc[Z] $ to be the value of the variables $x_i \in \Xc_i$ with indices $i \in Z$.
For singleton sets $Z$ we will write $x[i]$ for $x[\{ i \}]$ in the natural way.
\end{mydef}

Let $\Pc_{\Xc,\Yc}$ be the set of functions mapping elements of a finite set $\Xc$ to probability mass functions over a finite set $\Yc$.
$\Pc^{C,\sigma}_{\Xc,\Real}$ will denote the set of functions mapping elements of a finite set $\Xc$ to $\sigma$-sub-Gaussian probability measures over $(\Real, \mathcal{B}(\Real))$ with mean bounded in $[0,C]$.
For reinforcement learning we will write $\Xc$ for $\Sc \times \Ac$ and consider factored reward and factored transition functions which are drawn from within these families.

\begin{mydef}[ Factored reward functions $R \in \Rc \subseteq \Pc^{C,\sigma}_{\Xc,\Real} $]
\hspace{0.000000001mm} \newline
The reward function class $\Rc$ is factored over $\Sc \times \Ac = \Xc = \Xc_1 \times .. \times \Xc_n $ with scopes $Z_1, .. Z_l$ if and only if,
for all $R \in \Rc, x \in \Xc$ there exist functions $\{R_i \in \Pc^{C,\sigma}_{\Xc [ Z_i] ,\Real} \}_{i=1}^l $ such that,
$$ \Exp [ r ] = \sum_{i=1}^l \Exp\big[ r_i \big] $$
for $r \sim R(x)$ is equal to $\sum_{i=1}^l r_i$ with each $r_i \sim R_i(x[Z_i])$ and individually observed.
\end{mydef}

\begin{mydef}[ Factored transition functions $P \in \Pc \subseteq \Pc_{\Xc,\Sc}$ ]
\hspace{0.000000001mm} \newline
The transition function class $\Pc$ is factored over $\Sc \times \Ac = \Xc = \Xc_1 \times .. \times \Xc_n$ and $\Sc = \Sc_1 \times .. \times \Sc_m$ with scopes  $Z_1, .. Z_m$ if and only if,
for all $P \in \Pc, x \in \Xc, s \in \Sc$ there exist some $\{P_i \in \Pc_{\Xc[Z_i],\Sc_i}\}_{i=1}^m$ such that,
$$ P(s | x) = \prod_{i=1}^m P_i \left( s[i] \ \bigg\vert \ x[Z_i] \right) $$
\end{mydef}

A factored MDP (FMDP) is then defined to be an MDP with both factored rewards and factored transitions.
Writing $\Xc = \Sc \times \Ac$ a FMDP is fully characterized by the tuple
$$ M = \big( \{ \Sc_i \}_{i=1}^m ; \  \{ \Xc_i \}_{i=1}^n ; \   \{ Z^R_i \}_{i=1}^l;\  \{ R_i \}_{i=1}^l;\  \{ Z^P_i \}_{i=1}^m;\  \{ P_i \}_{i=1}^m;\  \tau;\  \rho  \big), $$
where $Z^R_i$ and $Z^P_i$ are the scopes for the reward and transition functions respectively in $\{1,..,n \}$ for $\Xc_i$.
We assume that the size of all scopes $| Z_i | \le \zeta \ll n$ and factors $|\Xc_i| \le K$ so that the domains of $R_i$ and $P_i$ are of size at most $K^\zeta$.

\section{Results}


Our first result shows that we can bound the expected regret of PSRL.

\begin{theorem}[Expected regret for PSRL in factored MDPs]
\label{thm: reg PSRL}  \hspace{0.000000001mm} \newline
Let $M^*$ be factored with graph structure $\Gc  = \big( \{ \Sc_i \}_{i=1}^m ; \  \{ \Xc_i \}_{i=1}^n ; \   \{ Z^R_i \}_{i=1}^l;\  \{ Z^P_i \}_{i=1}^m;\  \tau \big)$.
If $\phi$ is the distribution of $M^*$ and $\Psi$ is the span of the optimal value function then we can bound the regret of PSRL:
\begin{eqnarray}
	\Exp \left[\mathrm{Regret}(T, \pi^{\rm PS}_{\tau}, M^*) \right]
        \hspace{-3mm} &\le& \hspace{-2mm}
		\sum_{i=1}^l \left\{ 5\tau C |\Xc[Z^R_i]| + 12\sigma\sqrt{| \Xc[Z^R_i]| T \log\left(4 l | \Xc[Z^R_i] | k T\right)} \right\} \nonumber +2 \sqrt{T} \\
	&& \hspace{-36mm} + 4 + \Exp[\Psi] \left( 1 + \frac{4}{T-4} \right) \sum_{j=1}^m \left\{ 5\tau |\Xc[Z^P_j]| + 12\sqrt{| \Xc[Z^P_j]| |\Sc_j |   T \log\left(4 m | \Xc[Z^P_j] | k T\right)} \right\}
\end{eqnarray}
\end{theorem}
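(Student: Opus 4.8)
The plan is to exploit the defining property of posterior sampling: conditioned on the history $H_{t_k}$, the sampled MDP $\sampled$ is distributed identically to the true MDP $\opt$. Since $\sampledPol$ is optimal for $\sampled$ and $\optPol$ is optimal for $\opt$, this identity lets me replace the unobservable optimal value $V^{\opt}_{\optPol,1}$ by the computable sampled value $V^{\sampled}_{\sampledPol,1}$ in expectation, giving
$$\Exp\big[\Delta_k \mid H_{t_k}\big] = \Exp\big[ V^{\sampled}_{\sampledPol,1}(s_{t_k}) - V^{\opt}_{\sampledPol,1}(s_{t_k}) \mid H_{t_k}\big].$$
This reduces the problem to controlling, in expectation, the gap between the value of the \emph{same} policy $\sampledPol$ evaluated under the sampled versus the true dynamics, which is the object the frequentist confidence analysis can handle.

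Next I would expand this value gap by the standard dynamic-programming telescoping over the $\tau$ steps of the episode, rewriting it as the expected sum over the episode of the one-step Bellman discrepancies $\left(\bellmanSampled - \bellmanTrue\right)$ applied to $\vSampledi$, where trajectories are generated by $\sampledPol$ under the \emph{true} dynamics. Each discrepancy separates into a reward term $\Rdiff$ and a transition term $\sum_{\Sc}\Pdiff\,\vSampledi$. I then invoke the factored structure from Definitions 2 and 3: the reward discrepancy splits additively across the $l$ reward factors, and---this is the key technical lemma---the $L_1$ distance between the two product transition laws is bounded by the sum over the $m$ transition factors of the factorwise $L_1$ distances. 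Bounding the transition term by the span of the value function times this total $L_1$ distance introduces the factor $\Exp[\Psi]$ and produces the per-factor structure of the two sums appearing in the statement.

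The remaining work is frequentist concentration combined with a pigeonhole argument. For each reward factor and each transition factor I would define a confidence set centred on the empirical estimate, calibrated so that both $\opt$ and (via the posterior-sampling identity above) $\sampled$ lie inside it with high probability: sub-Gaussian concentration handles the reward factors and an $L_1$ concentration inequality for empirical categorical distributions handles the transition factors, each contributing a factorwise deviation of order $\sqrt{\log(\cdot)/n_k(\cdot)}$, where $n_k(\cdot)$ counts prior visits to the relevant scope configuration. Summing these deviations across episodes via the pigeonhole bound $\sum_k n_k^{-1/2} = O\!\left(\sqrt{|\Xc[Z]|\,T}\right)$ turns each per-factor sum into the $\sqrt{|\Xc[Z^R_i]|\,T\log(\cdot)}$ and $\sqrt{|\Xc[Z^P_j]|\,|\Sc_j|\,T\log(\cdot)}$ terms, while episodes in which a scope configuration has been seen too few times for the confidence set to bite are simply charged the maximal per-episode regret, producing the $5\tau C|\Xc[Z^R_i]|$ and $5\tau|\Xc[Z^P_j]|$ burn-in terms.

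I expect the main obstacle to be the probabilistic bookkeeping that yields the clean $\left(1 + \frac{4}{T-4}\right)$ factor together with the additive $+4$ and $+2\sqrt{T}$ terms, rather than a bound stated merely up to unspecified constants. One must carefully bound the contribution of the low-probability event on which some confidence set fails---where the episodic regret can be as large as $\tau\Psi$---and fold the resulting product of $\Exp[\Psi]$ with the failure probability into exactly the stated form, while taking a union bound over all $l+m$ factors and all episodes with precisely the constants shown. Getting the factored $L_1$ lemma and this union-bound accounting to combine with the exact logarithmic arguments $\log(4l|\Xc[Z^R_i]|kT)$ and $\log(4m|\Xc[Z^P_j]|kT)$ is the part that demands the most delicate control.
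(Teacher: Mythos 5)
Your proposal is correct and follows essentially the same route as the paper: the posterior-sampling equal-in-law identity, dynamic-programming telescoping into one-step Bellman errors, the factored $L_1$ deviation lemma, factorwise sub-Gaussian and Weissman-type concentration around empirical estimates, and a pigeonhole bound on summed confidence widths, with the $\left(1+\frac{4}{T-4}\right)$ factor arising exactly as you anticipate from conditioning $\Exp[\Psi_k]$ on the confidence-set event. One small correction: the $+2\sqrt{T}$ term is not part of the probabilistic bookkeeping but comes from the approximate planner --- PSRL's policy $\mu_k$ is only $\sqrt{\tau/k}$-optimal for $M_k$, so your identity holds only up to this planning slack, which sums to $\sum_k \sqrt{\tau/k} \le 2\sqrt{T}$.
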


We have a similar result for UCRL-Factored that holds with high probability.
\begin{theorem}[High probability regret for UCRL-Factored in factored MDPs]
\label{thm: reg UCRL-Factored}  \hspace{0.000000001mm} \newline
Let $M^*$ be factored with graph structure $\Gc  = \big( \{ \Sc_i \}_{i=1}^m ; \  \{ \Xc_i \}_{i=1}^n ; \   \{ Z^R_i \}_{i=1}^l;\  \{ Z^P_i \}_{i=1}^m;\  \tau \big)$.
If $D$ is the diameter of $M^*$, then for any $M^*$ can bound the regret of UCRL-Factored:
\begin{eqnarray}
	\mathrm{Regret}(T, \pi^{\rm UC}_{\tau}, M^*)
    \hspace{-3mm} &\le& \hspace{-2mm}
    \sum_{i=1}^l \left\{ 5\tau C |\Xc[Z^R_i]| + 12\sigma\sqrt{ | \Xc[Z^R_i]| T \log\left(12 l | \Xc[Z^R_i] | k T / \delta\right)}  \right\} \nonumber + 2\sqrt{T}\\
	&& \hspace{-37mm} + CD\sqrt{2T \log(6/\delta)}  + \  CD \sum_{j=1}^m \left\{ 5\tau |\Xc[Z^P_j]| + 12\sqrt{| \Xc[Z^P_j]| |\Sc_j |   T \log\left(12 m | \Xc[Z^P_j] | k T / \delta \right)} \right\}
\end{eqnarray}
\normalsize
with probability at least $1-\delta$
\end{theorem}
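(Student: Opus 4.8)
The plan is to mirror the optimistic analysis of UCRL2 \cite{jaksch2010near}, but with confidence sets and summations organized factor-by-factor so that only the \emph{parameters} of the FMDP enter the bound. First I would construct, for each episode $k$, a confidence set $\Mc_k$ of FMDPs sharing the known graph $\Gc$: for each reward scope $Z^R_i$ I place a sub-Gaussian confidence interval around the empirical mean of $r_i$ at each configuration $x[Z^R_i]$, and for each transition scope $Z^P_j$ an $\ell_1$ confidence ball around the empirical distribution of $s[j]$ at each configuration $x[Z^P_j]$, with radii scaling like $\sqrt{\log(\cdot/\delta)/n}$ in the relevant visit count $n$. A union bound over all $l+m$ factors, over their $\sum_i |\Xc[Z^R_i]| + \sum_j |\Xc[Z^P_j]|$ configurations, and over all episodes up to time $T$ then shows $\opt \in \Mc_k$ for every $k$ simultaneously with probability at least $1-\delta$; the logarithmic arguments in the theorem come directly from this union bound.

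On the event $\{\opt \in \Mc_k \ \forall k\}$, optimism supplies the first reduction. Because UCRL-Factored selects $\sampled \in \Mc_k$ maximizing the attainable optimal value, we have $V^{\sampled}_{\sampledPol,1} \ge V^{\opt}_{\optPol,1}$, so each episode's regret is bounded by $\sum_{\Sc} \rho(s)\big(V^{\sampled}_{\sampledPol,1}(s) - V^{\opt}_{\sampledPol,1}(s)\big)$, i.e. the value gap between the optimistic and true MDP under the \emph{common} policy $\sampledPol$. I would then expand this gap with the standard dynamic-programming telescoping identity, which writes the stage-$i$ value difference as a one-step Bellman error $(\bellmanSampled - \bellmanTrue)\vSampledi$, decomposing into $\Rdiff + \Pdiff\,\vSampledi$, plus the true-transition-propagated gap $\Pstar(\vSampledi - V^{\opt}_{\sampledPol,i})$ at the next stage. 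Summing over $i=1,\dots,\tau$ splits the episode regret into accumulated reward errors and accumulated transition errors applied to the optimistic value function.

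The reward errors are controlled directly by the reward confidence radii, yielding the $\sum_i\{5\tau C|\Xc[Z^R_i]| + 12\sigma\sqrt{\cdots}\}$ terms. For the transition errors I would use three facts. The value function enters only through its variation across states, which the diameter bounds by a span of order $CD$, producing the $CD$ prefactor on the transition sum. The $\ell_1$ error of the full product transition decomposes additively over factors via $\|\prod_{j} P_j - \prod_{j} Q_j\|_1 \le \sum_{j} \|P_j - Q_j\|_1$, so the joint transition error is at most a sum of per-factor radii, each governed by the count of its scope configuration. Finally, since $\Pdiff$ is expressed against $\opt$ but the algorithm observes only realized transitions, bounding the difference between the realized next-state value and its conditional expectation is a martingale-difference sum with increments of magnitude at most the span $CD$, and Azuma--Hoeffding yields the separate $CD\sqrt{2T\log(6/\delta)}$ term.

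It remains to sum the per-factor confidence radii over the $\lceil T/\tau\rceil$ episodes. The main obstacle is this pigeonhole step: the radii are frozen at each episode start while the true visit counts grow within the episode, and one must show $\sum_k 1/\sqrt{n_k} = O(\sqrt{|\Xc[Z]|\,T})$ per scope while accounting for the $\tau$-step episodes; this is where the $5\tau|\Xc[Z]|$ burn-in terms and the extra $2\sqrt{T}$ from early, lightly-visited episodes arise. Carrying out this count scope-by-scope for rewards and transitions and collecting constants then assembles the stated bound. I expect the factored $\ell_1$ decomposition together with this episode-aware counting to be the delicate part; the concentration, optimism, and Bellman steps are routine once the confidence sets are set up correctly.
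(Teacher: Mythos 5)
Your proposal follows essentially the same route as the paper's own proof: factor-wise confidence sets combined by a union bound over factors, configurations, and episodes; optimism to reduce episode regret to the value gap under the common policy $\mu_k$; dynamic-programming telescoping whose propagation error is a martingale-difference sum with increments bounded by the span (itself bounded by $CD$ via optimism) controlled by Azuma--Hoeffding; the factored $\ell_1$ decomposition $\|\prod_j P_j - \prod_j Q_j\|_1 \le \sum_j \|P_j - Q_j\|_1$ (the paper's Lemma 1); and the episode-aware pigeonhole count of confidence widths (the paper's Theorem 3). One bookkeeping slip worth noting: the $2\sqrt{T}$ term does not come from lightly-visited early episodes (those produce the $5\tau|\Xc[Z]|$ burn-in terms), but from the approximate planner $\tilde{\Gamma}(\Mc_k,\sqrt{\tau/k})$ used by the algorithm --- exact optimism $V^{M_k}_{\mu_k,1} \ge V^{M^*}_{\mu^*,1}$ holds only up to the planning accuracy $\sqrt{\tau/k}$, and summing these errors over episodes gives $\sum_k \sqrt{\tau/k} \le 2\sqrt{T}$.
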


Both algorithms give bounds $\tilde{O}\left(\Xi \sum_{j=1}^m \sqrt{|\Xc[Z_j^P]| |S_j| T}\right)$ where $\Xi$ is a measure of MDP connectedness: expected span $\Exp[\Psi]$ for PSRL and scaled diameter $CD$ for UCRL-Factored.
The span of an MDP is the maximum difference in value of any two states under the optimal policy $\Psi(M^*) := \max_{s,s' \in \Sc} \{V^{M^*}_{\mu^*,1}(s) - V^{M^*}_{\mu^*,1}(s')\}$.
The diameter of an MDP is the maximum number of expected timesteps to get between any two states $D(M^*) = \max_{s \neq s'}\min_\mu T^\mu_{s \rightarrow s'}$.
PSRL's bounds are tighter since $\Psi(M) \le C D(M)$ and may be exponentially smaller.

However, UCRL-Factored has stronger probabilistic guarantees than PSRL since its bounds hold with high probability for any MDP $M^*$ not just in expectation.
There is an optimistic algorithm REGAL \cite{bartlett2009regal} which formally replaces the UCRL2 $D$ with $\Psi$ and retains the high probability guarantees.
An analogous extension to REGAL-Factored is possible, however, no practical implementation of that algorithm exists even with an FMDP planner.

The algebra in Theorems \ref{thm: reg PSRL} and \ref{thm: reg UCRL-Factored} can be overwhelming.
For clarity, we present a symmetric problem instance for which we can produce a cleaner single-term upper bound.
Let $\Qc$ be shorthand for the simple graph structure with $l+1=m$, $C=\sigma=1$, $| \Sc_i | = |\Xc_i | = K$ and $|Z^R_i| = |Z^P_j| = \zeta$ for $i=1,..,l$ and $j=1,..,m$, we will write $J = K^\zeta$.

\begin{corollary}[Clean bounds for PSRL in a symmetric problem]
\label{cor: reg PSRL}  \hspace{0.000000001mm} \newline
If $\phi$ is the distribution of $M^*$ with structure $\Qc$ then we can bound the regret of PSRL:
\begin{equation}
	\Exp \left[\mathrm{Regret}(T, \pi^{\rm PS}_{\tau}, M^*) \right]
	\le 15m \tau \sqrt{J K T \log(2mJ T)}
\end{equation}
\end{corollary}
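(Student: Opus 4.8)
The plan is to obtain the corollary as a direct specialization of Theorem~\ref{thm: reg PSRL} to the symmetric graph structure $\Qc$, followed by coarse bounding that collapses the two-line expression into the single advertised term. First I would substitute the structural quantities: under $\Qc$ we have $|\Xc[Z^R_i]| = |\Xc[Z^P_j]| = K^\zeta = J$ for every $i,j$, together with $|\Sc_j| = K$, $C = \sigma = 1$, and $l = m-1 \le m$. After substitution the reward sum becomes $l\{5\tau J + 12\sqrt{JT\log(4lJkT)}\}$ and the transition sum becomes $m\{5\tau J + 12\sqrt{JKT\log(4mJkT)}\}$ scaled by $\Exp[\Psi](1 + 4/(T-4))$, with the leftover additive terms $2\sqrt{T} + 4$.

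The second step is to control the span factor. Since each component reward has mean in $[0,C] = [0,1]$ and the horizon is $\tau$, every optimal value $V^{M^*}_{\mu^*,1}(s)$ lies in $[0,\tau]$, so the span obeys $\Psi \le \tau$ deterministically and hence $\Exp[\Psi] \le \tau$. For $T$ large the correction factor satisfies $1 + 4/(T-4) \to 1$, so I would treat it as a constant arbitrarily close to $1$, to be absorbed into the final numerical slack. With these two bounds the transition contribution is at most $12\,m\tau\sqrt{JKT\log(4mJkT)}$ up to lower-order pieces.

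The third step is the bookkeeping that fuses all remaining pieces into one $\sqrt{JKT\log}$ term. The reward summand $12\sqrt{JT\log(\cdot)}$ is dominated by the matching transition summand $12\tau\sqrt{JKT\log(\cdot)}$ because $K \ge 1$, $\tau \ge 1$ and $l \le m$; the linear terms $5\tau J$ and $5\tau^2 J$ are $O(m\tau^2 J)$ and hence lower order in $T$; and $2\sqrt{T} + 4$ is negligible against $\sqrt{JKT}$. Choosing $T$ large enough that these lower-order contributions are absorbed lets me push the leading constant from $12$ up to $15$, yielding $15m\tau\sqrt{JKT\log(2mJT)}$.

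The main obstacle I anticipate is reconciling the logarithmic arguments. Theorem~\ref{thm: reg PSRL} carries $\log(4lJkT)$ and $\log(4mJkT)$, where $k$ is the running episode index bounded by $\lceil T/\tau\rceil$, whereas the target log is $\log(2mJT)$. I would argue by monotonicity of $\log$ and the crude bounds $l \le m$ and $k \le \lceil T/\tau\rceil$ that every log appearing is within a bounded multiple of $\log(2mJT)$, and then verify that this multiple, together with the absorbed lower-order terms, still fits inside the gap between the base constant $12$ and the claimed $15$ once $T$ is sufficiently large. Making this threshold and the constant accounting precise is the delicate part; the rest is routine substitution.
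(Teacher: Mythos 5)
Your overall template (substitute the symmetric structure, bound $\Exp[\Psi]\le\tau$, absorb lower-order terms) matches the spirit of the paper's derivation, but the step you yourself flag as delicate is a genuine gap, and it fails precisely in the regime you invoke to close it. With $k=\lceil T/\tau\rceil$ episodes, the logarithm in the leading transition term of Theorem \ref{thm: reg PSRL} is $\log(4mJkT)\approx\log(4mJT^2/\tau)$, and the ratio $\log(4mJT^2/\tau)\,/\,\log(2mJT)$ \emph{tends to $2$} as $T\to\infty$, since it behaves like $(2\log T+O(1))/(\log T+O(1))$. The leading constant you inherit is therefore asymptotically $12\sqrt{2}\approx 16.97$, which exceeds $15$: the gap between $12$ and $15$ can only absorb a log-ratio of $(15/12)^2\approx 1.56<2$, and taking $T$ larger makes the ratio worse, not better. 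So "verify that this multiple fits inside the gap once $T$ is sufficiently large" cannot be made to work. A second, smaller defect of the same kind: the corollary is stated with no threshold on $T$, so an argument that discards $(1+4/(T-4))$, the $O(m\tau^2 J)$ terms, and $2\sqrt{T}+4$ only "for $T$ large enough" does not prove the statement; small and moderate $T$ need a separate (trivial-bound) argument.

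The paper's own proof sidesteps both issues. It does not start from the rounded statement of Theorem \ref{thm: reg PSRL} but from the intermediate inequality of Section \ref{sec: bounds}, which carries the sharper constant $4\sqrt{8}=8\sqrt{2}$ in front of the square-root terms and the explicit radii $d_T^{R_i}, d_T^{P_j}$ with $\delta=1/T$. It handles small horizons by noting the bound is trivially satisfied for $T\le 56$, which then licenses $1+4/(T-4)\le 14/13$ for all remaining $T$. Most importantly, the slack between $12$ and $15$ is not spent on the log mismatch at all: after using $\Psi\le\tau$, the additive term is folded into the leading term for \emph{every} $T$ via $5(1+\Exp[\Psi])m\tau J\le\min(5m\tau^2J,\,T)\le\sqrt{5m\tau^2 JT}\le 3m\tau\sqrt{JKT\log(2mJT)}$, using $\min(a,b)\le\sqrt{ab}$, and $12+3=15$. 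If you want to salvage your route you must either carry the exact log arguments through (accepting a leading constant of roughly $12\sqrt{2}$ rather than $15$, i.e.\ proving a weaker corollary), or retreat, as the paper does, to the pre-rounded bound and the $\min$ trick; the asymptotic absorption argument as written cannot reach the stated constant.
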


\begin{corollary}[Clean bounds for UCRL-Factored in a symmetric problem]
\label{cor: reg UCRL-Factored}  \hspace{0.000000001mm} \newline
For any MDP $M^*$ with structure $\Qc$ we can bound the regret of UCRL-Factored:
\begin{equation}
	\mathrm{Regret}(T, \pi^{\rm UC}_{\tau}, M^*) \le  15m \tau \sqrt{J K T \log(12mJT / \delta)}
\end{equation}
with probability at least $1-\delta$.
\end{corollary}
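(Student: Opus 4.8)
The plan is to derive Corollary~\ref{cor: reg UCRL-Factored} directly from Theorem~\ref{thm: reg UCRL-Factored} by specialising the general bound to the symmetric graph structure $\Qc$ and then crudely collapsing the resulting expression into a single term. No new probabilistic argument is needed: the high-probability guarantee holds verbatim from Theorem~\ref{thm: reg UCRL-Factored}, so the whole task is deterministic bookkeeping on the right-hand side. This parallels the proof of Corollary~\ref{cor: reg PSRL} for PSRL, the only differences being the logarithmic constant ($12/\delta$ versus $2$) and the connectedness measure ($CD$ versus $\Exp[\Psi]$).

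First I would substitute the parameters of $\Qc$. Since every scope has size $\zeta$ and every factor satisfies $|\Xc_i| \le K$, we get $|\Xc[Z^R_i]| = |\Xc[Z^P_j]| = K^\zeta = J$, while $|\Sc_j| = K$, $C = \sigma = 1$, and $l = m-1 < m$. I would then replace every logarithmic argument by the common upper bound $\log(12mJT/\delta)$ (using $l \le m$), and bound the connectedness factor by $CD \le \tau$, exactly as $\Exp[\Psi] \le \tau C = \tau$ is used for PSRL. After these replacements the bound is a sum of: a transition-estimation term $CD \sum_{j=1}^m 12\sqrt{JKT\log(12mJT/\delta)} \le 12 m\tau\sqrt{JKT\log(12mJT/\delta)}$; a reward-estimation term of the same shape but with $J$ in place of $JK$ and no $CD$ prefactor; the constant-in-$T$ terms $\sum 5\tau J$ and $CD\sum 5\tau J$; and the deviation terms $2\sqrt T$ and $CD\sqrt{2T\log(6/\delta)}$.

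Writing $L := \sqrt{JKT\log(12mJT/\delta)}$, the transition-estimation term is dominant, $\le 12 m\tau L$. I would then show each remaining term is a controlled fraction of $L$: the reward-estimation term is smaller by the factor $D\sqrt K \ge 1$, since it lacks both the $\sqrt K$ inside the root and the $CD$ prefactor; the term $CD\sqrt{2T\log(6/\delta)}$ is at most $\sqrt2\,\tau L$ because $\log(6/\delta) \le \log(12mJT/\delta)$ and $JK \ge 1$; and $2\sqrt T \le 2L$. The two constant-in-$T$ terms grow like $m\tau J$ rather than $\sqrt T$ and are therefore dominated by $L$ once $T$ is large; the remaining small-$T$ regime I would dispatch with the trivial bound $\mathrm{Regret} \le T$, which already lies below $15m\tau L$ whenever $T \le (15m\tau)^2 JK\log(12mJT/\delta)$. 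Summing the pieces and absorbing constants yields the single term $15 m\tau\sqrt{JKT\log(12mJT/\delta)}$.

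The one real obstacle is the constant bookkeeping. A naive triangle-inequality sum of the individual terms overshoots the advertised coefficient $15$, so the argument must genuinely exploit that each non-dominant term carries a suppressing factor ($1/\sqrt K$, $1/D$, or a missing power of $\sqrt T$) in order to squeeze it under the dominant transition term. The one modelling choice that must be stated explicitly is the bound $CD \le \tau$ on the connectedness measure; this plays precisely the role that $\Exp[\Psi] \le \tau$ plays in Corollary~\ref{cor: reg PSRL} and reflects that the clean single-term form is intended for MDPs whose diameter is commensurate with the horizon.
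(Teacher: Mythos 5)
Your overall strategy (substitute the parameters of $\Qc$, collapse every term under the dominant transition term, handle small $T$ with the trivial bound $\mathrm{Regret}\le T$) matches the paper's appendix, but there is one genuine gap: your treatment of the connectedness factor. You propose to take $CD \le \tau$ as an explicitly stated ``modelling choice.'' The corollary, however, is claimed \emph{for any} MDP $M^*$ with structure $\Qc$ --- no diameter restriction --- and the diameter of such an MDP can be arbitrarily large compared to $\tau$ (even infinite), so under your route you prove a strictly weaker statement. The paper avoids this by \emph{not} deriving the corollary from the statement of Theorem~\ref{thm: reg UCRL-Factored}; instead it re-enters the analysis of Section~\ref{sec: bounds}, where $CD$ appears only as an optimism-based upper bound on the span $\Psi_k$ of the imagined value function. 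In the finite-horizon episodic setting one always has $\Psi_k \le C\tau = \tau$, regardless of the diameter, so the paper can ``immediately replace $D$ by $\tau$'' unconditionally. This substitution is only available inside the proof, at the point where $d_{t_k+i}$ is bounded by the span; it cannot be recovered from the final theorem statement, which has already committed to the factor $CD$.

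Two secondary problems compound this. First, the logarithms in Theorem~\ref{thm: reg UCRL-Factored} contain $kT$ with $k=\lceil T/\tau\rceil$ the episode count, so $\log(12 m J k T/\delta) \ge \log(12 m J T/\delta)$: your proposed ``common upper bound'' is in fact a lower bound, and repairing it via $\log(12mJkT/\delta)\le 2\log(12mJT/\delta)$ costs a factor $\sqrt{2}$ that pushes the dominant coefficient to $12\sqrt{2} > 15$. Working from the intermediate analysis, the UCRL confidence parameters involve $\log(4 m |\Xc[Z^P_j]| k/\delta)$ with $k \le T$, so bounding by $\log(4mJT/\delta)$ (and rescaling $\delta \to \delta/3$ at the end to obtain the stated $12$) is legitimate there. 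Second, your constant accounting does not close even granting your assumptions: at the boundary $\tau=K=2$ the reward term is $12m\sqrt{JT\log(\cdot)} = \frac{12}{\tau\sqrt{K}}\, m\tau\sqrt{JKT\log(\cdot)} \approx 4.24\, m\tau\sqrt{JKT\log(\cdot)}$, and added to the dominant $12\, m\tau\sqrt{JKT\log(\cdot)}$ this already exceeds $15\, m\tau\sqrt{JKT\log(\cdot)}$ before the Azuma and constant-in-$T$ terms are included, so the suppressing factors you invoke ($1/\sqrt{K}$, $1/D$) are not by themselves enough to squeeze under the advertised constant. The essential repair for all of these issues is the same: run the substitution inside the Section~\ref{sec: bounds} decomposition (span bound $\tau$, confidence radii $d_T^{R_i}, d_T^{P_j}$ with $\log(\cdot\, k/\delta)$), rather than post-processing the statement of Theorem~\ref{thm: reg UCRL-Factored}.
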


Both algorithms satisfy bounds of $\tilde{O}(\tau m \sqrt{JKT})$ which is exponentially tighter than can be obtained by any $\Qc$-naive algorithm.
For a factored MDP with $m$ independent components with $S$ states and $A$ actions the bound $\tilde{O}(m S \sqrt{AT})$ is close to the lower bound $\Omega(m\sqrt{SAT})$ and so the bound is near optimal.
The corollaries follow directly from Theorems \ref{thm: reg PSRL} and \ref{thm: reg UCRL-Factored} as shown in Appendix \ref{sec: Clean symmetric bounds}.

\section{Confidence sets}
Our analysis will rely upon the construction of confidence sets based around the empirical estimates for the underlying reward and transition functions.
The confidence sets are constructed to contain the true MDP with high probability.
This technique is common to the literature, but we will exploit the additional graph structure $\Gc$ to sharpen the bounds.

Consider a family of functions $\Fc \subseteq \Mc_{\Xc,(\Yc,\Sigma_\Yc)}$ which takes $x \in \Xc$ to a probability distribution over $(\Yc, \Sigma_\Yc)$.
We will write $\Mc_{\Xc,\Yc}$ unless we wish to stress a particular $\sigma$-algebra.
\begin{mydef}[Set widths] \hspace{0.000000001mm} \newline
Let $\Xc$ be a finite set, and let $(\Yc,\Sigma_{\Yc})$ be a measurable space.  The {\it width} of a set $\Fc \in \Mc_{\Xc,\Yc}$ at $x \in \Xc$ with respect to
a norm $\|\cdot\|$ is
$$w_{\mathcal{F}}(x) := \sup_{\overline{f}, \underline{f} \in \mathcal{F}} \|(\overline{f} -  \underline{f})(x)\|$$
\end{mydef}

Our confidence set sequence $\{\Fc_t \subseteq \Fc : t \in \Nat\}$ is initialized with a set $\Fc$.
We adapt our confidence set to the observations $y_t \in \Yc$ which are drawn from the true function $f^* \in \Fc$ at measurement points $x_t \in \Xc$ so that $y_t \sim f^*(x_t)$.
Each confidence set is then centered around an empirical estimate $\hat{f}_t \in \Mc_{\Xc,\Yc}$ at time $t$, defined by
$$\hat{f}_t(x) = \frac{1}{n_t(x)} \sum_{\tau < t: x_\tau = x} \delta_{y_\tau},$$
where $n_t(x)$ is the number of time $x$ appears in $(x_1, .. ,x_{t-1})$ and $\delta_{y_t}$ is the probability mass function over $\Yc$ that assigns all probability to the outcome $y_t$.

Our sequence of confidence sets depends on our choice of norm $\| \cdot \|$ and a non-decreasing sequence $\{d_t : t \in \Nat\}$.
For each $t$, the confidence set is defined by:
$$\Fc_t = \Fc_t ( \| \cdot \|, x^{t-1}_1 , d_t):= \left\{f \in \Fc \ \bigg\vert \ \|(f - \hat{f}_t)(x_i)\| \leq \sqrt{\frac{d_t}{n_t(x_i)}} \ \forall i=1,..,t-1 \right\}.$$
Where $x^{t-1}_1$ is shorthand for $(x_1, .. ,x_{t-1})$ and we interpret $n_t(x_i) = 0$ as a null constraint.
The following result shows that we can bound the sum of confidence widths through time.

\begin{theorem}[Bounding the sum of widths]
\label{thm: widths} \hspace{0.000000001mm} \newline
For all finite sets $\Xc$, measurable spaces $(\Yc,\Sigma_{\Yc})$, function classes $\Fc \subseteq \Mc_{\Xc,\Yc}$ with uniformly bounded widths $w_\Fc(x) \le C_\Fc \ \forall x \in \Xc$ and non-decreasing sequences $\{d_t : t \in \Nat \}$:

\begin{equation}
	\sum_{k=1}^L \sum_{i=1}^\tau w_{\mathcal{F}_{k}}(x_{t_k+i})
	\le  4 \big(\tau C_\Fc | \Xc | + 1\big) + 4 \sqrt{2d_T | \Xc | T}
\end{equation}

\begin{proof}
The proof follows from elementary counting arguments on $n_t(x)$ and the pigeonhole principle.
A full derivation is given in Appendix \ref{sec: widths}.
\end{proof}
\end{theorem}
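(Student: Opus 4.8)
The plan is to reduce the bound to a per-point counting argument over the finite set $\Xc$. The first step is to convert set widths into visit counts. Fix an episode $k$ and a point $x$ with $n_{t_k}(x) \ge 1$. For any $\overline f, \underline f \in \Fc_{t_k}$ the triangle inequality together with the defining constraint of the confidence set gives
$$\|(\overline f - \underline f)(x)\| \le \|(\overline f - \hat f_{t_k})(x)\| + \|(\hat f_{t_k} - \underline f)(x)\| \le 2\sqrt{d_{t_k}/n_{t_k}(x)},$$
so taking the supremum yields $w_{\Fc_{t_k}}(x) \le 2\sqrt{d_{t_k}/n_{t_k}(x)}$. Since $\Fc_{t_k} \subseteq \Fc$ we also have $w_{\Fc_{t_k}}(x) \le w_\Fc(x) \le C_\Fc$ unconditionally, and $d_{t_k} \le d_T$ by monotonicity of $\{d_t\}$. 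Writing $N_k(x) := n_{t_k}(x)$ for the count frozen at the start of episode $k$ and $\nu_k(x)$ for the number of visits to $x$ during episode $k$, I would regroup the double sum by point:
$$\sum_{k=1}^L \sum_{i=1}^\tau w_{\Fc_{t_k}}(x_{t_k+i}) = \sum_{x \in \Xc} \sum_{k=1}^L \nu_k(x)\, w_{\Fc_{t_k}}(x) \le \sum_{x \in \Xc} \sum_{k=1}^L \nu_k(x) \min\Big\{ C_\Fc, \ 2\sqrt{d_T/N_k(x)}\, \Big\}.$$

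The main obstacle is that the width is evaluated at the \emph{frozen} count $N_k(x)$ rather than at the running count, so a clean telescoping over individual visits does not apply directly; the remedy is the standard observation that during an episode the count lags by at most $\tau$. To exploit this I would split, for each fixed $x$, the episodes into those with $N_k(x) < \tau$ (``small'') and those with $N_k(x) \ge \tau$ (``large''). On small episodes I use the uniform bound $C_\Fc$. Because $N_k(x)$ is non-decreasing in $k$ and each $\nu_k(x) \le \tau$, the total number of visits accumulated while the frozen count remains below $\tau$ is at most $2\tau$ (at most $\tau$ before the threshold is crossed, plus one further episode contributing at most $\tau$). Hence the small-episode contribution is at most $2\tau C_\Fc$ per point, i.e.\ at most $2\tau C_\Fc |\Xc|$ in total.

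On large episodes $\nu_k(x) \le \tau \le N_k(x)$, which is exactly the condition needed for telescoping. Using $\sqrt{N_k+\nu_k}-\sqrt{N_k} = \nu_k/(\sqrt{N_k+\nu_k}+\sqrt{N_k}) \ge \nu_k/(2\sqrt{N_k+\nu_k})$ together with $\nu_k/N_k \le 1$, one obtains the pigeonhole-type inequality $\nu_k(x)/\sqrt{N_k(x)} \le 2\sqrt 2\big(\sqrt{N_k(x)+\nu_k(x)} - \sqrt{N_k(x)}\big)$. Since $N_{k+1}(x) = N_k(x)+\nu_k(x)$, the right-hand side telescopes, giving $\sum_k \nu_k(x)/\sqrt{N_k(x)} \le 2\sqrt 2\,\sqrt{N_x}$, where $N_x := \sum_k \nu_k(x)$ is the total number of visits to $x$. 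Thus the large-episode contribution is at most $4\sqrt{2 d_T N_x}$ per point.

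Finally I would sum over $x \in \Xc$ and apply Cauchy--Schwarz. Since $\sum_{x} N_x \le T$, we have $\sum_{x} \sqrt{N_x} \le \sqrt{|\Xc| \sum_x N_x} \le \sqrt{|\Xc| T}$, so the large-episode terms total at most $4\sqrt{2 d_T |\Xc| T}$. Combining the two cases yields
$$\sum_{k=1}^L \sum_{i=1}^\tau w_{\Fc_{t_k}}(x_{t_k+i}) \le 2\tau C_\Fc |\Xc| + 4\sqrt{2 d_T |\Xc| T},$$
which is comfortably within the claimed bound $4(\tau C_\Fc |\Xc| + 1) + 4\sqrt{2 d_T |\Xc| T}$, the extra slack absorbing the crude counting constants. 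The only delicate point throughout is the frozen-versus-running count bookkeeping handled in the small/large split; everything else is the triangle inequality, the elementary telescoping estimate, and Cauchy--Schwarz.
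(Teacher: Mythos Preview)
Your argument is correct, and in fact slightly tighter than the stated bound. However, it proceeds along a different route than the paper's own proof. The paper does \emph{not} group by $x\in\Xc$ and telescope; instead it works globally with the radii $r_{\Fc_k}(x_{t_k+i})$. It first proves a pigeonhole-type lemma: for any $\epsilon>0$, the number of episodic observations with $\sqrt{d_T/n_{t_k}(x_{t_k+i})}$ in the interval $(h^{(\tau)}(d_T,\epsilon),\epsilon]$ is at most $2\tau|\Xc|$. Iterating this gives a bound on the number of radii exceeding any threshold $\epsilon$, namely $\le (d_T/(\tau\epsilon^2)+1)\,2\tau|\Xc|$. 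The paper then sorts the radii in decreasing order $r_{i_1}\ge\cdots\ge r_{i_T}$, inverts the level-set bound to get $r_{i_t}\le\min\{C_\Fc,\sqrt{2|\Xc|d_T/(t-2\tau|\Xc|)}\}$, and finishes with an integral comparison $\sum_t 1/\sqrt{t}\le 2\sqrt{T}$.

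What each approach buys: your per-point split and telescoping is the standard UCRL2-style bookkeeping; it is shorter, uses only the elementary inequality $\nu_k/\sqrt{N_k}\le 2\sqrt 2(\sqrt{N_{k+1}}-\sqrt{N_k})$ once $\nu_k\le N_k$, and yields the Cauchy--Schwarz factor $\sqrt{|\Xc|T}$ directly. The paper's level-set/reordering argument is more indirect but has the feature that it never appeals to per-$x$ telescoping; it bounds the decreasing rearrangement of the radii uniformly in terms of the rank $t$, which is a technique that can be reused when one needs tail information about the widths rather than just their sum. Both arrive at the same leading term $4\sqrt{2d_T|\Xc|T}$ and the same $O(\tau C_\Fc|\Xc|)$ lower-order term.
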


\section{Algorithms}
\label{sec: algos}

With our notation established, we are now able to introduce our algorithms for efficient learning in Factored MDPs.
PSRL and UCRL-Factored proceed in episodes of fixed policies.
At the start of the $k$th episode they produce a candidate MDP $M_k$ and then proceed with the policy which is optimal for $M_k$.
In PSRL, $M_k$ is generated by a sample from the posterior for $M^*$, whereas UCRL-Factored chooses $M_k$ optimistically from the confidence set $\Mc_k$.

Both algorithms require prior knowledge of the graphical structure $\Gc$ and an approximate planner for FMDPs.
We will write $\Gamma(M,\epsilon)$ for a planner which returns $\epsilon$-optimal policy for $M$.
We will write $\tilde{\Gamma}(\Mc,\epsilon)$ for a planner which returns an $\epsilon$-optimal policy for most optimistic realization from a family of MDPs $\Mc$.
Given $\Gamma$ it is possible to obtain $\tilde{\Gamma}$ through extended value iteration, although this might become computationally intractable \cite{jaksch2010near}.

PSRL remains identical to earlier treatment \cite{strens2000bayesian, osband2013more} provided $\Gc$ is encoded in the prior $\phi$.
UCRL-Factored is a modification to UCRL2 that can exploit the graph and episodic structure of .
We write $\Rc^i_t(d_t^{R_i})$ and $\Pc^j_t(d_t^{P_j})$ as shorthand for these confidence sets
$\Rc^i_t( | \Exp[\cdot] |, x^{t-1}_1[Z^R_i],d_t^{R_i})$ and $\Pc^i_t( \| \cdot \|_1, x^{t-1}_1[Z^P_j],d_t^{P_j})$
generated from initial sets $\Rc^i_1 = \Pc^{C,\sigma}_{\Xc[Z^R_i],\Real}$ and $\Pc^j_1 = \Pc_{\Xc[Z^P_j],\Sc_j}$.

We should note that UCRL2 was designed to obtain regret bounds even in MDPs without episodic reset.
This is accomplished by imposing artificial episodes which end whenever the number of visits to a state-action pair is doubled \cite{jaksch2010near}.
It is simple to extend UCRL-Factored's guarantees to this setting using this same strategy.
This will not work for PSRL since our current analysis requires that the episode length is independent of the sampled MDP.
Nevertheless, there has been good empirical performance using this method for MDPs without episodic reset in simulation \cite{osband2013more}.


\begin{figure}[H]
\begin{minipage}[t]{2.25in}
\algsetup{indent=1em}
\begin{algorithm}[H]
\caption{\protect\\ PSRL (Posterior Sampling)}
\label{alg: PSRL}
{\small
\begin{algorithmic}[1]
	\STATE \textbf{Input: }Prior $\phi$ encoding $\Gc$, $t=1$
	\FOR{episodes $k=1,2,..$}
		\STATE{sample $M_k \sim \phi(\cdot | H_{t})$}
		\STATE{compute $\mu_k = \Gamma(M_k,\sqrt{\tau / k})$}

		\FOR{timesteps $j=1,..,\tau$}
			\STATE{sample and apply $a_t = \mu_k(s_t,j)$}
			\STATE{observe $r_t$ and $s^m_{t+1}$}
			\STATE{$t = t+1$}
		\ENDFOR

	\ENDFOR
\end{algorithmic}
}
\end{algorithm}

\end{minipage}
\hspace{0.25cm}
\begin{minipage}[t]{3.1in}
\centering

\algsetup{indent=1em}
\begin{algorithm}[H]
\caption{\protect\\ UCRL-Factored (Optimism)}
\label{alg: UCRL-Factored}
{\small
\begin{algorithmic}[1]
	\STATE \textbf{Input: }Graph structure $\Gc$, confidence $\delta$, $t=1$
	\FOR{episodes $k=1,2,..$}
		\STATE{$d_t^{R_i} = 4 \sigma^2 \log\left(4 l | \Xc[Z^R_i] | k / \delta\right)$ for $i=1,..,l$}
		\STATE{$d_t^{P_j} = 4 | \Sc_j | \log\left(4 m | \Xc[Z^P_j] | k  / \delta\right)$ for $j=1,..,m$}
		\STATE{ $\Mc_k = \{M \ | \Gc, \overline{R}_i \in \Rc^i_t(d_t^{R_i}), P_j \in \Pc^j_t(d_t^{P_j}) \ \forall i,j \}$}
		\STATE{compute $\mu_k = \tilde{\Gamma}(\Mc_k,\sqrt{\tau / k})$}

		\FOR{timesteps $u=1,..,\tau$}
			\STATE{sample and apply $a_t = \mu_k(s_t,u)$}
			\STATE{observe $r^1_t,..,r^l_t$ and $s^1_{t+1},..,s^m_{t+1}$}
			\STATE{$t = t+1$}
		\ENDFOR

	\ENDFOR
\end{algorithmic}
}
\end{algorithm}
\end{minipage}
\end{figure}


\section{Analysis}

For our common analysis of PSRL and UCRL-Factored we will let $\tilde{M}_k$ refer generally to either the sampled MDP used in PSRL or the optimistic MDP chosen from $\Mc_k$ with associated policy $\tilde{\mu}_k$).
We introduce the Bellman operator $\mathcal{T}_{\mu}^{M}$, which for any MDP $M = (\Sc, \Ac, R^M, P^M, \tau, \rho)$, stationary policy $\mu:\Sc \rightarrow \Ac$ and value function $V:\Sc \rightarrow \mathds{R}$, is defined by
$$\mathcal{T}_{\mu}^{M} V(s) := \overline{R}^{M} (s, \mu(s)) + \sum_{s' \in \Sc} P^{M}(s' | s, \mu(s)) V(s').$$
This returns the expected value of state $s$ where we follow the policy $\mu$ under the laws of $M$, for one time step.
We will streamline our discussion of $P^{M}, R^{M}, V^{M}_{\mu,i}$ and $\Tc^M_\mu$ by simply writing $*$ in place of $M^*$ or $\mu^*$ and $k$ in place of $\tilde{M}_k$ or $\tilde{\mu}_k$ where appropriate; for example $V^*_{k,i} := V^{M^*}_{\tilde{\mu}_k,i}$. We will also write $x_{k,i} := \left(s_{t_k+i}, \mu_k(s_{t_k+i})\right)$.

We now break down the regret by adding and subtracting the \emph{imagined} near optimal reward of policy $\tilde{\mu}_K$, which is known to the agent.
For clarity of analysis we consider only the case of $\rho(s') = \Ind \{s' = s \}$ but this changes nothing for our consideration of finite $\Sc$.
\begin{equation}
	\Delta_k =  V^*_{*,1}(s) - V^*_{k,1}(s) = \bigg( V^k_{k,1}(s) - V^*_{k,1}(s) \bigg) + \bigg(V^*_{*,1}(s) - V^k_{k,1}(s) \bigg)
\end{equation}
$V^*_{*,1} - V^k_{k,1}$ relates the optimal rewards of the MDP $M^*$ to those near optimal for $\tilde{M}_k$.
We can bound this difference by the planning accuracy $\sqrt{1/k}$ for PSRL in expectation, since $M^*$ and $M_k$ are equal in law, and for UCRL-Factored in high probability by optimism.

We decompose the first term through repeated application of dynamic programming:
\begin{equation}
	\left(V^k_{k,1} - V^*_{k,1} \right) (s_{t_k+1}) = \sum_{i=1}^\tau \left( \Tc^k_{k,i} - \Tc^*_{k,i} \right) V^k_{k,i+1}(s_{t_k+i}) + \sum_{i=1}^\tau d_{t_k+1}.
\end{equation}
Where $d_{t_k+i} := \sum_{s \in \Sc} \left\{ P^*(s | x_{k,i}) (V^*_{k,i+1}-V^k_{k,i+1})(s) \right\} - (V^*_{k,i+1} - V^k_{k,i+1})(s_{t_k+i})$
is a martingale difference bounded by $\Psi_k$, the span of $V^k_{k,i}$.
For UCRL-Factored we can use optimism to say that $\Psi_k \le CD$ \cite{jaksch2010near} and apply the Azuma-Hoeffding inequality to say that:
\begin{equation}
\label{eq: d_t}
	\Prob \left( \sum_{k=1}^m \sum_{i=1}^\tau d_{t_k+i} > CD\sqrt{2T\log(2/\delta)} \right) \le \delta
\end{equation}

The remaining term is the one step Bellman error of the imagined MDP $\tilde{M}_k$.
Crucially this term only depends on states and actions $x_{k,i}$ which are actually observed.
We can now use the H\"{o}lder inequality to bound
\begin{equation}
\label{eq: err sums}
	\sum_{i=1}^\tau \left( \Tc^k_{k,i} - \Tc^*_{k,i} \right) V^k_{k,i+1}(s_{t_k+i}) \le
		\sum_{i=1}^\tau |\overline{R}^k(x_{k,i}) - \overline{R}^*(x_{k,i}) | + \frac{1}{2} \Psi_k \|P^k(\cdot|x_{k,i}) - P^*(\cdot|x_{k,i}) \|_1
\end{equation}

\subsection{Factorization decomposition}

We aim to exploit the graphical structure $\Gc$ to create more efficient confidence sets $\Mc_k$.
It is clear from \eqref{eq: err sums} that we may upper bound the deviations of $\overline{R}^*,\overline{R}^k$ factor-by-factor using the triangle inequality.
Our next result, Lemma \ref{lem: factor bound}, shows we can also do this for the transition functions $P^*$ and $P^k$.
This is the key result that allows us to build confidence sets around each factor $P^*_j$ rather than $P^*$ as a whole.

\begin{lemma}[Bounding factored deviations]
\label{lem: factor bound} \hspace{0.000000001mm} \newline
Let the transition function class $\Pc \subseteq \Pc_{\Xc,\Sc}$ be factored over $\Xc = \Xc_1 \times .. \times \Xc_n$ and $\Sc = \Sc_1 \times .. \times \Sc_m$ with scopes  $Z_1, .. Z_m$.
Then, for any $P,\tilde{P} \in \Pc$ we may bound their L1 distance by the sum of the differences of their factorizations:
$$ \| P(x) - \tilde{P}(x) \|_1 \le \sum_{i=1}^m \|P_i(x[Z_i]) - \tilde{P}_i(x[Z_i]) \|_1 $$

\begin{proof}
We begin with the simple claim that for any $ \alpha_1, \alpha_2, \beta_1, \beta_2 \in (0,1]$:
\begin{eqnarray*}
	| \alpha_1 \alpha_2 - \beta_1 \beta_2 | &=& \alpha_2 \left| \alpha_1 - \frac{\beta_1 \beta_2}{\alpha_2} \right| \\
	&\le& \alpha_2 \left( \left|\alpha_1 - \beta_1 \right| + \left|\beta_1 - \frac{\beta_1 \beta_2}{\alpha_2} \right| \right) \\
	&\le& \alpha_2 \left| \alpha_1 - \beta_1 \right| + \beta_1 \left| \alpha_2 - \beta_2 \right|
\end{eqnarray*}
This result also holds for any $ \alpha_1, \alpha_2, \beta_1, \beta_2 \in [0,1]$, where $0$ can be verified case by case.

We now consider the probability distributions $p, \tilde{p} $ over $\{1,..,d_1\}$ and $q,\tilde{q} $ over $\{1,..,d_2\}$.
We let $Q = p q^T, \tilde{Q} = \tilde{p} \tilde{q}^T$ be the joint probability distribution over $\{1,..,d_1\} \times \{1,..,d_2\}$.
Using the claim above we bound the L1 deviation $\| Q - \tilde{Q} \|_1$ by the deviations of their factors:
\begin{eqnarray*}
	\| Q - \tilde{Q} \|_1 
	&=& \sum_{i=1}^{d_1} \sum_{j=1}^{d_2} | p_i q_j - \tilde{p}_i \tilde{q}_j | \\
	&\le& \sum_{i=1}^{d_1} \sum_{j=1}^{d_2} q_j | p_i - \tilde{p}_i | + \tilde{p}_i | q_j - \tilde{q}_j | \\
	&=& \| p - \tilde{p} \|_1 + \|q - \tilde{q} \|_1
\end{eqnarray*}
We conclude the proof by applying this $m$ times to the factored transitions $P$ and $\tilde{P}$.
\end{proof}
\end{lemma}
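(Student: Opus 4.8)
The plan is to reduce the $m$-factor statement to a two-factor claim and then iterate it. The engine of the whole argument is an elementary scalar inequality: for any $\alpha_1,\alpha_2,\beta_1,\beta_2 \in [0,1]$ one has $|\alpha_1\alpha_2 - \beta_1\beta_2| \le \alpha_2|\alpha_1-\beta_1| + \beta_1|\alpha_2-\beta_2|$. First I would prove this by adding and subtracting the cross term $\beta_1\alpha_2$ inside the absolute value and applying the triangle inequality, treating the boundary cases where a factor equals $0$ separately. This inequality is exactly what lets me trade a difference of products for a sum of single-factor differences weighted by the surviving, bounded factors.

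Next I would handle the two-factor case. Given $p,\tilde{p}$ on one finite index set and $q,\tilde{q}$ on another, I form the product measures $Q = pq^T$ and $\tilde{Q} = \tilde{p}\tilde{q}^T$ on the product space and expand $\|Q-\tilde{Q}\|_1 = \sum_{i,j}|p_iq_j - \tilde{p}_i\tilde{q}_j|$. Applying the scalar inequality termwise gives the upper bound $\sum_{i,j}\big(q_j|p_i-\tilde{p}_i| + \tilde{p}_i|q_j-\tilde{q}_j|\big)$. The \emph{crucial} simplification is that $\sum_j q_j = 1$ and $\sum_i \tilde{p}_i = 1$, so summing out the spectator index in each term collapses the double sum to exactly $\|p-\tilde{p}\|_1 + \|q-\tilde{q}\|_1$.

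Finally I would obtain the full result by induction on the number of factors $m$. Because the coordinate blocks $\Sc_1,\dots,\Sc_m$ partition $\Sc$, the tail product $\prod_{i\ge 2} P_i$ is itself a genuine probability distribution over $\Sc_2\times\cdots\times\Sc_m$. I would therefore apply the two-factor bound with $p=P_1$ and $q=\prod_{i\ge 2}P_i$ (and similarly for $\tilde{P}$), which yields $\|P_1-\tilde{P}_1\|_1$ plus the L1 deviation of the two length-$(m-1)$ tail products; the inductive hypothesis controls that tail deviation by $\sum_{i\ge 2}\|P_i-\tilde{P}_i\|_1$. Since $Z_i$ selects only the relevant coordinates of $x$, the per-factor deviations are precisely $\|P_i(x[Z_i])-\tilde{P}_i(x[Z_i])\|_1$, giving the claimed bound.

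The step I expect to be the main obstacle is the bookkeeping that guarantees each factor swap contributes its single-factor L1 distance with no residual multiplicative slack. This hinges entirely on the spectator factors being bona fide probability distributions that normalize to $1$ over their disjoint coordinate blocks; if the blocks overlapped or the factors failed to normalize, the telescoping weights would not collapse and one would accumulate extra constants. Verifying that the partition structure of $\Sc$ makes these normalizations exact, so that the weighted sums reduce cleanly at every telescoping step, is the one place where care is genuinely required, the remainder being routine triangle-inequality manipulation.
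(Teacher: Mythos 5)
Your proposal is correct and follows essentially the same route as the paper: the same scalar inequality $|\alpha_1\alpha_2 - \beta_1\beta_2| \le \alpha_2|\alpha_1-\beta_1| + \beta_1|\alpha_2-\beta_2|$, the same two-factor product-measure bound collapsing via the normalization $\sum_j q_j = \sum_i \tilde{p}_i = 1$, and then iteration over the $m$ factors (your explicit induction with $q = \prod_{i\ge 2}P_i$ just formalizes the paper's ``apply this $m$ times'' step). No gaps.
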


\subsection{Concentration guarantees for $\Mc_k$}
We now want to show that the true MDP lies within $\Mc_k$ with high probability.
Note that posterior sampling will also allow us to then say that the sampled $M_k$ is within $\Mc_k$ with high probability too.
In order to show this, we first present a concentration result for the L1 deviation of empirical probabilities.

\begin{lemma}[L1 bounds for the empirical transition function]
\label{lem: weissman} \hspace{0.000000001mm} \newline
For all finite sets $\Xc$, finite sets $\Yc$, function classes $\Pc \subseteq \Pc_{\Xc,\Yc}$ then for any $x \in \Xc$, $\epsilon > 0$ the deviation the true distribution $P^*$ to the empirical estimate after $t$ samples $\hat{P}_t$ is bounded:
\begin{equation*}
	\Prob \left( \| P^*(x) - \hat{P}_t(x) \|_1 \ge \epsilon \right) \le \exp \left( |\Yc| \log(2) - \frac{n_t(x) \epsilon^2}{2} \right)
\end{equation*}
\begin{proof}
This is a relaxation of the result proved by Weissman \cite{weissman2003inequalities}.
\end{proof}
\end{lemma}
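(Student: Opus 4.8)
The plan is to recognize that this is Weissman's L1 concentration inequality with its sharp prefactor $2^{|\Yc|}-2$ relaxed to $2^{|\Yc|}=\exp(|\Yc|\log 2)$, and to reprove it from scratch via a total-variation identity combined with a union bound and Hoeffding's inequality. For a probability mass function $q$ on $\Yc$ and $A\subseteq\Yc$ write $q(A):=\sum_{y\in A}q(y)$. First I would rewrite the L1 distance in total-variation form: for any two pmfs $P,Q$ on $\Yc$ one has the identity $\|P-Q\|_1 = 2\sup_{A\subseteq\Yc}\big(Q(A)-P(A)\big)$, with the supremum attained at $A^\star=\{y:Q(y)>P(y)\}$. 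Applying this with $Q=\hat P_t(x)$ and $P=P^*(x)$ shows that the event $\{\|P^*(x)-\hat P_t(x)\|_1\ge\epsilon\}$ coincides exactly with the event that $\hat P_t(x)(A)-P^*(x)(A)\ge \epsilon/2$ for some subset $A\subseteq\Yc$.

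Next I would control each fixed-$A$ deviation. Conditioning on $n_t(x)=n$ observations $y_1,\ldots,y_n$ drawn i.i.d.\ from $P^*(x)$, the empirical mass $\hat P_t(x)(A)=\tfrac1n\sum_{\tau}\Ind\{y_\tau\in A\}$ is an average of $n$ independent $[0,1]$-valued random variables with mean $P^*(x)(A)$. Hoeffding's inequality then gives $\Prob\big(\hat P_t(x)(A)-P^*(x)(A)\ge\epsilon/2\big)\le \exp(-2n(\epsilon/2)^2)=\exp(-n\epsilon^2/2)$. A union bound over all $2^{|\Yc|}$ subsets $A$ (the sets $\emptyset$ and $\Yc$ contribute zero deviation and could be dropped, but I keep the crude count $2^{|\Yc|}$) yields $\Prob\big(\|P^*(x)-\hat P_t(x)\|_1\ge\epsilon\big)\le 2^{|\Yc|}\exp(-n_t(x)\epsilon^2/2)=\exp\big(|\Yc|\log 2 - n_t(x)\epsilon^2/2\big)$, which is the claimed bound.

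There is essentially no hard step: the only points needing care are the total-variation identity, so that the L1 event is precisely a union of half-space events indexed by subsets of $\Yc$, and verifying that Hoeffding applies to each fixed indicator sum, which it does since for fixed $A$ the summands $\Ind\{y_\tau\in A\}$ are independent and bounded in $[0,1]$. The looseness of the union bound over all $2^{|\Yc|}$ subsets is exactly what produces the prefactor that is exponential in $|\Yc|$. This is acceptable here because Lemma~\ref{lem: factor bound} lets us apply the bound factor-by-factor on the small state components, so the relevant $|\Yc|$ is at most $|\Sc_j|$ rather than the full $S$, keeping the resulting confidence widths $d_t^{P_j}$ polynomial in the FMDP parameters.
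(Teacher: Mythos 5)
Your proof is correct, but it takes a genuinely different route from the paper: the paper's entire proof is a one-line citation, observing that the stated bound is a relaxation of the inequality of Weissman et al.\ \cite{weissman2003inequalities}, whereas you reprove the inequality from scratch. Your argument --- the total-variation identity $\|P-Q\|_1 = 2\max_{A\subseteq\Yc}\big(Q(A)-P(A)\big)$, Hoeffding's inequality for each fixed subset $A$, and a union bound over all $2^{|\Yc|}$ subsets --- is exactly the standard derivation of the weak form of Weissman's inequality, and each step checks out: the event $\{\|P^*(x)-\hat P_t(x)\|_1\ge\epsilon\}$ is precisely the union of the half-space events, Hoeffding applies to the bounded indicator averages, and the crude count $2^{|\Yc|}$ gives exactly the prefactor $\exp(|\Yc|\log 2)$. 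What the citation buys is brevity and a pointer to the sharper result (prefactor $2^{|\Yc|}-2$ and a refined exponent, both deliberately discarded by the lemma); what your proof buys is self-containedness, and it makes transparent that the exponential-in-$|\Yc|$ term is purely a union-bound artifact --- which, as you correctly note, is harmless here because Lemma \ref{lem: factor bound} lets the bound be applied per-factor with $|\Yc|=|\Sc_j|$ small. One caveat shared equally by your write-up and the paper: the observations at times when $x_\tau = x$ come from an adaptively sampled process rather than a fixed i.i.d.\ sequence, so strictly one should condition on $n_t(x)=n$ and argue that the skipped-sampling still yields i.i.d.\ draws (or union bound over the value of $n$); since the paper's citation glosses over this identically, it is not a gap relative to the paper.
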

Lemma \ref{lem: weissman} ensures that for any $x \in \Xc$
$\Prob ( \| P_j^*(x) - \hat{P_j}_t(x) \|_{1} \ge \sqrt{\frac{2 | \Sc_j |}{n_t(x)}\log\left( \frac{2}{\delta'} \right)} \ ) \le \delta'$.
We then define $d^{P_j}_{t_k} = 2 | \Sc_i | \log( 2/ \delta'_{k,j} )$ with $\delta'_{k,j} = \delta / (2 m | \Xc[Z^P_j] | k^2)$.
Now using a union bound we conclude 
$\Prob ( P^*_j \in \Pc^j_t(d_{t_k}^{P_j}) \ \forall k \in \Nat,j=1,..,m ) \ge 1 - \delta $.


\begin{lemma}[Tail bounds for sub $\sigma$-gaussian random variables]
\label{lem: tail}
\hspace{0.000000001mm} \newline
If $\{ \epsilon_i \}$ are all independent and sub $\sigma$-gaussian then $\forall \beta \ge 0$:
$$\Prob \left( \frac{1}{n} | \sum_{i=1}^n \epsilon_i | > \beta \right) \le \exp\left( \log(2) -\frac{n \beta^2}{2 \sigma^2} \right) $$
\end{lemma}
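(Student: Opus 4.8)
The plan is to derive the concentration inequality from the standard Chernoff bounding technique together with the defining moment-generating-function property of sub-$\sigma$-gaussian variables. Recall that a mean-zero random variable $\epsilon$ is sub-$\sigma$-gaussian precisely when its moment generating function satisfies $\Exp[\exp(\lambda \epsilon)] \le \exp(\lambda^2 \sigma^2 / 2)$ for all $\lambda \in \Real$. First I would handle the upper tail $\Prob\left( \frac{1}{n} \sum_{i=1}^n \epsilon_i > \beta \right)$. For any $\lambda > 0$, Markov's inequality applied to $\exp\left(\lambda \sum_i \epsilon_i\right)$ gives $\Prob\left( \sum_i \epsilon_i > n\beta \right) \le \exp(-\lambda n \beta) \, \Exp\left[\exp\left(\lambda \sum_i \epsilon_i\right)\right]$. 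By independence the joint moment generating function factorizes as $\prod_{i=1}^n \Exp[\exp(\lambda \epsilon_i)]$, and the sub-$\sigma$-gaussian property bounds each factor by $\exp(\lambda^2 \sigma^2 / 2)$, yielding the product bound $\exp(n \lambda^2 \sigma^2 / 2)$.

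Combining these gives $\Prob\left( \sum_i \epsilon_i > n\beta \right) \le \exp\left( n \lambda^2 \sigma^2 / 2 - \lambda n \beta \right)$, valid for every $\lambda > 0$. The next step is to optimize over $\lambda$: the exponent is minimized at $\lambda = \beta / \sigma^2$, at which point it equals $-n\beta^2 / (2\sigma^2)$. This establishes the one-sided bound $\Prob\left( \frac{1}{n} \sum_i \epsilon_i > \beta \right) \le \exp\left( -n\beta^2 / (2\sigma^2) \right)$. I would then repeat the identical argument for the lower tail, applying the Chernoff method with $\lambda < 0$ (equivalently, replacing each $\epsilon_i$ by $-\epsilon_i$, which is also sub-$\sigma$-gaussian with the same parameter), to obtain the matching bound on $\Prob\left( \frac{1}{n} \sum_i \epsilon_i < -\beta \right)$.

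Finally, a union bound over the two symmetric tails gives $\Prob\left( \frac{1}{n} \left| \sum_i \epsilon_i \right| > \beta \right) \le 2 \exp\left( -n\beta^2 / (2\sigma^2) \right) = \exp\left( \log(2) - n\beta^2 / (2\sigma^2) \right)$, which is exactly the claimed inequality. I do not anticipate a genuine obstacle here, since this is a textbook Chernoff argument; the only point requiring mild care is ensuring the sub-$\sigma$-gaussian hypothesis is invoked in the correct (two-sided) form so that both tails are controlled, and confirming that the factor of $\log(2)$ arises cleanly from the union bound rather than from any looser step. For $\beta = 0$ the bound holds trivially since the right-hand side exceeds $1$.
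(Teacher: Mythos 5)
Your proof is correct. The paper states this lemma without any proof, treating it as a standard fact, and your argument — Chernoff's method with the sub-gaussian MGF bound, factorization by independence, optimization at $\lambda = \beta/\sigma^2$, and a union bound over the two tails yielding the $\log(2)$ term — is exactly the textbook derivation that fills in this omission, including the correct handling of the $\beta = 0$ case.
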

A similar argument now ensures that
$\Prob \left( \overline{R}^*_i \in \Rc^i_t(d_{t_k}^{R_i}) \ \forall k \in \Nat,i=1,..,l  \right) \ge 1 - \delta $, and so
\begin{equation}
\label{eq: conf}
	\Prob \bigg( M^*\in \Mc_k \ \forall k \in \Nat  \bigg) \ge 1 - 2\delta
\end{equation}

\subsection{Regret bounds}
\label{sec: bounds}
We now have all the necessary intermediate results to complete our proof.
We begin with the analysis of PSRL.
Using equation \eqref{eq: conf} and the fact that $M^*,M_k$ are equal in law by posterior sampling, we can say that $\Prob( M^*, M_k \in \Mc_k \forall k \in \Nat) \ge 1-4\delta$.
The contributions from regret in planning function $\Gamma$ are bounded by $ \sum_{k=1}^m \sqrt{\tau/k} \le 2\sqrt{T}$.
From here we take equation \eqref{eq: err sums}, Lemma \ref{lem: factor bound} and Theorem \ref{thm: widths} to say that for any $\delta > 0 $:
\begin{eqnarray*}
	\Exp \left[\mathrm{Regret}(T, \pi^{\rm PS}_{\tau}, M^*) \right] &\le& 4\delta T +2 \sqrt{T} +
		\sum_{i=1}^l \left\{ 4(\tau C |\Xc[Z^R_i]| + 1) + 4\sqrt{2d_T^{R_i} | \Xc[Z^R_i]| T} \right\} \\
	&& \hspace{-25mm} + \sup_{k=1,..,L} \big( \Exp[\Psi_k | M_k, M^* \in \Mc_k] \big) \times \sum_{j=1}^m \left\{ 4(\tau |\Xc[Z^P_j]| + 1) + 4\sqrt{2d_T^{P_j} | \Xc[Z^P_j]| T} \right\}
\end{eqnarray*}
Let $A = \{ M^*, M_k \in \Mc_k \}$, since $\Psi_k \ge 0$ and by posterior sampling $\Exp [ \Psi_k ] = \Exp [\Psi]$ for all $k$:
$$\Exp[ \Psi_k \vert A  ] \le \Prob(A)^{-1} \Exp[\Psi] \le \left( 1 - \frac{4 \delta}{k^2} \right)^{-1} \Exp[ \Psi ] = \left(1 + \frac{4\delta}{k^2 - 4\delta} \right) \Exp[ \Psi ] \le \left(1 + \frac{4\delta}{1 - 4\delta} \right) \Exp[ \Psi ].$$
Plugging in $d_T^{R_i}$ and $d_T^{P_j}$ and setting $\delta=1/T$ completes the proof of Theorem \ref{thm: reg PSRL}.
The analysis of UCRL-Factored and Theorem \ref{thm: reg UCRL-Factored} follows similarly from \eqref{eq: d_t} and \eqref{eq: conf}.
Corollaries \ref{cor: reg PSRL} and \ref{cor: reg UCRL-Factored} follow from substituting the structure $\Qc$ and upper bounding the constant and logarithmic terms.
This is presented in detail in Appendix \ref{sec: Clean symmetric bounds}.

\section{Conclusion}
We present the first algorithms with near-optimal regret bounds in factored MDPs.
Many practical problems for reinforcement learning will have extremely large state and action spaces, this allows us to obtain meaningful performance guarantees even in previously intractably large systems.
However, our analysis leaves several important questions unaddressed.
First, we assume access to an approximate FMDP planner that may be computationally prohibitive in practice.
Second, we assume that the graph structure is known a priori but there are other algorithms that seek to learn this from experience \cite{strehl2007efficient,diuk2009adaptive}.
Finally, we might consider dimensionality reduction in large MDPs more generally, where either the rewards, transitions or optimal value function are known to belong in some function class $\Fc$ to obtain bounds that depend on the dimensionality of $\Fc$.

\subsubsection*{Acknowledgments}
Osband is supported by Stanford Graduate Fellowships courtesy of PACCAR inc.
This work was supported in part by Award CMMI-0968707 from the National Science Foundation.

\newpage
\small{
\bibliography{referenceInformation.bib}
\bibliographystyle{unsrt}
}
\newpage

\appendix
\section{Bounding the widths of confidence sets}
\label{sec: widths}
We present elementary arguments which culminate in a proof of Theorem \ref{thm: widths}.

\begin{lemma}[Concentration results for $\sqrt{d_T / n_t(x)}$]
\label{lem: rad} \hspace{0.000000001mm} \newline
For all finite sets $\Xc$ and any $d_T, \epsilon \ge 0$:
$$\sum_{t=1}^T \Ind \left\{ \sqrt{ d_T / n_t(x_t) } > h(d_T,\epsilon) \right\} \le
	\sum_{t=1}^T \Ind \left\{ \sqrt{ d_T/n_t(x_t) } > \epsilon \right\} + |\Xc|,$$
Where $h(d_T, \epsilon) := \sqrt{d_T \epsilon^2 / (d_T + \epsilon^2)}$.
\end{lemma}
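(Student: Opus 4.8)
The plan is to reduce both indicator sums to counting problems on the integer-valued counts $n_t(x_t)$, and then exploit the fact that, for each fixed symbol, this count takes every value at most once.

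First I would rewrite the two thresholds. For $\theta > 0$ the event $\{\sqrt{d_T/n_t(x_t)} > \theta\}$ is equivalent to $n_t(x_t) < d_T/\theta^2$. Applying this with $\theta = \epsilon$ gives the condition $n_t(x_t) < d_T/\epsilon^2$ for the right-hand sum. For the left-hand sum I would use the explicit form of $h$: since $h^2 = d_T \epsilon^2/(d_T + \epsilon^2)$, a one-line computation gives $d_T/h^2 = (d_T + \epsilon^2)/\epsilon^2 = d_T/\epsilon^2 + 1$, so the left-hand indicator fires exactly when $n_t(x_t) < d_T/\epsilon^2 + 1$. The key observation is that $h$ is engineered precisely so that the two thresholds on $n_t(x_t)$ differ by exactly $1$.

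Next I would subtract the two counts. The excess of the left-hand sum over the right-hand sum is exactly the number of $t \le T$ for which $d_T/\epsilon^2 \le n_t(x_t) < d_T/\epsilon^2 + 1$. Writing $c = d_T/\epsilon^2$ and recalling that $n_t(x_t)$ is a nonnegative integer, the half-open interval $[c, c+1)$ contains exactly one integer, namely $\lceil c \rceil$. Hence the excess equals the number of times $t$ at which $n_t(x_t) = \lceil c \rceil$.

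Finally I would bound this count by $|\Xc|$ using a pigeonhole argument. For each fixed $x \in \Xc$, restricting attention to the times $t$ with $x_t = x$, the count $n_t(x)$ equals the number of prior occurrences of $x$ and therefore takes the values $0, 1, 2, \ldots$ each exactly once as $t$ increases. Thus the target value $\lceil c \rceil$ is attained at most once per symbol $x$, and summing over $x \in \Xc$ shows that $n_t(x_t) = \lceil c \rceil$ for at most $|\Xc|$ values of $t$. Combining the three steps yields the claim. There is no serious obstacle here beyond the algebraic identity $d_T/h^2 = d_T/\epsilon^2 + 1$ and the care needed for the degenerate cases $\epsilon = 0$ or $d_T = 0$ (where the thresholds collapse and both sums agree, so the bound holds trivially); the substance is the one-value-per-symbol counting that caps the excess at $|\Xc|$.
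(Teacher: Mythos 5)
Your proof is correct and is essentially the same argument as the paper's: both hinge on the fact that $h$ is engineered to shift the visit-count threshold by exactly one (your identity $d_T/h^2 = d_T/\epsilon^2 + 1$ is equivalent to the paper's $h(d_T,\sqrt{d_T/n}) = \sqrt{d_T/(n+1)}$), combined with the pigeonhole observation that each $x \in \Xc$ attains any given count value at most once. The paper phrases this as a contradiction argument on the subsequence of times whose radii fall in $(h(d_T,\epsilon),\epsilon]$, while you translate directly to integer count thresholds, but the counting mechanism is identical.
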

\begin{proof}
Let $(x_{s_1},..,x_{s_K})$ be the largest subsequence of $x^T_1$ such that $\sqrt{d_T / n_{s_i}(x_{s_i})}\in ( h(d_T,\epsilon), \epsilon] \ \forall i$.
Now for any $x \in \Xc$, let $\mathcal{T}_x = \{ s_i \ | \ x_{s_i} = x \}$.
Suppose there exist two distinct elements $\sigma, \rho \in \mathcal{T}_x$ with $\sigma < \rho$ so that $n_\rho(x) \ge n_\sigma(x) + 1$.
We note that for any $n \in \Real_+, \  h(d_T, \sqrt{d_T / n}) = \sqrt{d_T / (n+1)}$ so that:
$$ \epsilon \ge \sqrt{ d_T / n_\sigma(x) } \implies h(d_T, \epsilon) \ge \sqrt{ d_T / (n_\sigma(x) + 1) } \ge \sqrt{ d_T / n_\rho(x) } $$
This contradicts our assumption $\sqrt{d_T / n_\rho(x)} \in ( h(d,\epsilon), \epsilon]$ and so we must conclude that $| \mathcal{T}_x| \le 1$ for all $ x \in \Xc$.
This means that $(x_{s_1},..,x_{s_K})$ forms a subsequence of unique elements in $\Xc$, the total length of which must be bounded by $| \Xc | $.
\end{proof}

We now provide a corollary of this result which allows for episodic delays in updating visit counts $n_t(x)$.
We imagine that we will only update our counts every $\tau$ steps.

\begin{corollary}[Concentration results for $\sqrt{d_T / n_{t_k}(x)}$ in the episodic setting]
\label{cor: rad ep} \hspace{0.000000001mm} \newline
Let us associate times within episodes of length $\tau$, $t = t_k+i$ for $i=1,..,\tau$ and $T = M \times \tau$.
For all finite sets $\Xc$ and any $d_T, \epsilon \ge 0$:
$$\sum_{k=1}^M \sum_{i=1}^\tau \Ind \left\{ \sqrt{ d_T / n_{t_k}(x_{t_k+i}) } > h^{(\tau)}(d_T,\epsilon) \right\} \le
	\sum_{k=1}^M \sum_{i=1}^\tau  \Ind \left\{ \sqrt{ d_T / n_{t_k}(x_{t_k+i}) } > \epsilon \right\} + 2\tau |\Xc|,$$
Where $h^{(\tau)}(d_T,\epsilon)$ is the $\tau$-fold composition of $h(d_T,\cdot)$ acting on $\epsilon$.
\end{corollary}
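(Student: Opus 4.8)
The plan is to follow the same counting strategy as in Lemma~\ref{lem: rad}, but to track carefully how the \emph{frozen} episodic counts $n_{t_k}(\cdot)$ evolve across episodes. Subtracting the second sum from the first, it suffices to bound by $2\tau|\Xc|$ the number of pairs $(k,i)$ for which $\sqrt{d_T/n_{t_k}(x_{t_k+i})} \in \big(h^{(\tau)}(d_T,\epsilon),\epsilon\big]$. First I would put the composed threshold into closed form. Writing $g(y) := 1/h(d_T,y)^2$, the definition of $h$ gives $g(y) = 1/y^2 + 1/d_T$, so composing $\tau$ times yields $1/h^{(\tau)}(d_T,\epsilon)^2 = 1/\epsilon^2 + \tau/d_T$; equivalently, the single-step identity $h(d_T,\sqrt{d_T/n}) = \sqrt{d_T/(n+1)}$ used in the proof of Lemma~\ref{lem: rad}, iterated, advances the count by $\tau$. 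Translating the membership condition into the count variable, $\sqrt{d_T/n} \in (h^{(\tau)}(d_T,\epsilon),\epsilon]$ is equivalent to $n \in [n_0, n_0+\tau)$ with $n_0 := d_T/\epsilon^2$; that is, the condition selects exactly those episodes whose frozen count lies in a half-open window of width $\tau$.

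Next I would reorganize the double sum by state--action pair. Since $n_{t_k}(\cdot)$ is constant throughout episode $k$, every visit within episode $k$ to a given $x$ carries the same indicator, so the quantity to bound equals $\sum_{x\in\Xc}\sum_k v_k(x)\,\Ind\{n_{t_k}(x)\in[n_0,n_0+\tau)\}$, where $v_k(x)$ denotes the number of visits to $x$ during episode $k$. This is precisely the total number of visits to $x$ made during episodes whose frozen count lies in the window, summed over $x$.

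The heart of the argument is bounding this per-$x$ total by $2\tau$. The frozen count $n_{t_k}(x)$ is non-decreasing in $k$ and satisfies $n_{t_{k+1}}(x) = n_{t_k}(x) + v_k(x)$, so the in-window episodes form a contiguous block $k_{\min},\dots,k_{\max}$ in $k$, and the visit total telescopes to $n_{t_{k_{\max}+1}}(x) - n_{t_{k_{\min}}}(x)$. Here $n_{t_{k_{\min}}}(x)\ge n_0$, while $n_{t_{k_{\max}}}(x) < n_0+\tau$ together with $v_{k_{\max}}(x)\le\tau$ gives $n_{t_{k_{\max}+1}}(x) < n_0 + 2\tau$; hence the total is strictly less than $2\tau$. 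Summing over $x\in\Xc$ then yields the claimed $2\tau|\Xc|$.

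I expect the only delicate point to be this last step: the window has nominal width $\tau$, but because the final in-window episode may itself contribute up to $\tau$ fresh visits before its count is \emph{observed} to exit the window, the effective slack is $2\tau$ rather than $\tau$. This doubling is exactly the price of the episodic delay in updating $n_t$, and it is what distinguishes the corollary's $2\tau|\Xc|$ from the $|\Xc|$ of Lemma~\ref{lem: rad}.
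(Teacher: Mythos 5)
Your proof is correct and takes essentially the same route as the paper: the paper's own (very terse) proof argues that the worst case per state--action pair is $\tau-1$ in-window visits accumulated before an episode starts plus a full episode of $\tau$ visits spent in that pair, i.e.\ at most $2\tau-1 \le 2\tau$ per $x$, which is exactly the content of your window-plus-telescoping argument. Your version is a rigorous formalization of that sketch --- in particular, the closed form $1/h^{(\tau)}(d_T,\epsilon)^2 = 1/\epsilon^2 + \tau/d_T$, the resulting count window $[d_T/\epsilon^2,\, d_T/\epsilon^2+\tau)$, and the telescoping over the contiguous block of in-window episodes make precise what the paper leaves implicit.
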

\begin{proof}
By an argument of visiting times similar to lemma \ref{lem: rad} we can see that the worst case scenario for the episodic case $\sum_{k=1}^M \sum_{i=1}^\tau \Ind \left\{ \sqrt{ d_T / n_{t_k}(x_{t_k+i}) } > h^{(\tau)}(d_T,\epsilon) \right\}$ is to visit each $x$ exactly $\tau-1$ times before the start of an episode, and then spend the entirety of the following episode within the state.
Here we have upper bounded $2\tau-1$ by $2\tau$ and $|\Xc|-1$ by $|\Xc|$ to complete our result.
\end{proof}

It will be useful to define notion of radius for each confidence set at each $x \in \Xc$,
$r_{\Fc_t}(x) := \sup_{f \in \Fc_t} \| (f - \hat{f}_t)(x) \|.$
By the triangle inequality, we have $w_{\Fc_t}(x) \le 2 r_{\Fc_t}(x)$ for all $x \in \Xc$.

\begin{lemma}[Bounding the number of large radii]
\label{lem: large rad} \hspace{0.000000001mm} \newline
Let us write $\Fc_k$ for $\Fc_{t_k}$ and associate times within episodes of length $\tau$, $t = t_k+i$ for $i=1,..,\tau$ and $T = M \times \tau$.
For all finite sets $\Xc$, measurable spaces $(\Yc,\Sigma_{\Yc})$, function classes $\Fc \subseteq \Mc_{\Xc,\Yc}$, non-decreasing sequences
$\{d_t : t \in \Nat \}$, any $T \in \Nat$ and $\epsilon >0$:
$$ \sum_{k=1}^M \sum_{i=1}^\tau \Ind\{r_{\Fc_k}(x_{t_k+i}) > \epsilon \} < \left( \frac{d_T}{\tau \epsilon^2} +1 \right) 2\tau | \Xc | $$

\begin{proof}
By construction of $\Fc_t$ and noting that $d_t$ is non-decreasing in $t$, we can say that $r_{\Fc_k}(x_t) \le \sqrt{d_T / n_{t_k}(x_t)} $ for all $ t = 1,..,T$ so that
$$ \sum_{k=1}^M \sum_{i=1}^\tau \Ind\{r_{\Fc_k}(x_{t+k+1}) > \epsilon \}
	 \le \sum_{k=1}^M \sum_{i=1}^\tau  \Ind \left\{ \sqrt{ d_T / n_{t_k}(x_{t_k+i}) } > \epsilon \right\}  .$$

Now let $g(\epsilon) = \sqrt{d_T \epsilon^2 / (d_T - \tau \epsilon^2)}$ be the $\epsilon$-inverse of $h^{(\tau)}(d_T,\epsilon)$ such that $g( h^{(\tau)}(d_T,\epsilon)) = \epsilon$.
Applying Corollary \ref{cor: rad ep} to our expression $n$ times repeatedly we can say:
$$\sum_{k=1}^M \sum_{i=1}^\tau  \Ind \left\{ \sqrt{ d_T / n_{t_k}(x_{t_k+i}) } > \epsilon \right\} \le
	\sum_{k=1}^M \sum_{i=1}^\tau  \Ind \left\{ \sqrt{ d_T / n_{t_k}(x_{t_k+i}) } > g^{(n)}(\epsilon) \right\} + 2n \tau |\Xc|.$$
Where $g^{(n)}(\epsilon)$ denotes the composition of $g(\cdot)$ $n$-times acting on $\epsilon$.
If we take $n$ to be the lowest integer such that $g^{(n)}(\epsilon) > \sqrt{d_T / \tau}$ then,
$\sum_{k=1}^M \sum_{i=1}^\tau  \Ind \left\{ \sqrt{ d_T / n_{t_k}(x_{t_k+i}) } > g^{(n)}(\epsilon) \right\} \le 2\tau |\Xc| $ so that the whole expression is bounded by $ \left( n + 1 \right) 2\tau | \Xc |$.
Note that for all $N \in \Real_+$, $g(\sqrt{d_T / N}) = \sqrt{d_T / (N-\tau)}$, if we write $\epsilon = \sqrt{d_T / N_1}$ then $n \le N_1 / \tau = \frac{d_T}{ \tau \epsilon^2}$, which completes the proof.

\end{proof}
\end{lemma}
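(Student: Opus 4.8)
The plan is to reduce this statement about confidence-set radii to the purely count-based concentration already packaged in Corollary \ref{cor: rad ep}, and then iterate that corollary just enough times to push the threshold past the point where no further violations can occur. Throughout I write $\Fc_k$ for $\Fc_{t_k}$ as in the statement.

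First I would control the radius by the visit count. By the very definition of $\Fc_t$, every $f \in \Fc_{t_k}$ satisfies $\|(f - \hat{f}_{t_k})(x)\| \le \sqrt{d_{t_k}/n_{t_k}(x)}$, and since $\{d_t\}$ is non-decreasing we have $d_{t_k} \le d_T$. Taking the supremum over $f \in \Fc_{t_k}$ gives the pointwise bound $r_{\Fc_k}(x_{t_k+i}) \le \sqrt{d_T/n_{t_k}(x_{t_k+i})}$, so termwise $\Ind\{r_{\Fc_k}(x_{t_k+i}) > \epsilon\} \le \Ind\{\sqrt{d_T/n_{t_k}(x_{t_k+i})} > \epsilon\}$. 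This trades the analytic object (the width of a confidence set) for a quantity depending only on the counts, to which the earlier counting lemmas apply directly.

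Second I would iterate Corollary \ref{cor: rad ep}. Let $g$ be the $\epsilon$-inverse of the $\tau$-fold composition $h^{(\tau)}(d_T,\cdot)$, so that $g(h^{(\tau)}(d_T,\epsilon)) = \epsilon$. One application of the corollary raises the threshold from $h^{(\tau)}(d_T,\epsilon)$ to $\epsilon$ at a cost of $2\tau|\Xc|$; applying it $n$ times in succession yields
$$ \sum_{k=1}^M \sum_{i=1}^\tau \Ind\{\sqrt{d_T/n_{t_k}(x_{t_k+i})} > \epsilon\} \le \sum_{k=1}^M \sum_{i=1}^\tau \Ind\{\sqrt{d_T/n_{t_k}(x_{t_k+i})} > g^{(n)}(\epsilon)\} + 2n\tau|\Xc|. $$
The idea is to choose $n$ just large enough that $g^{(n)}(\epsilon)$ exceeds $\sqrt{d_T/\tau}$, whereupon the residual sum collapses.

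Third and finally I would argue that once $g^{(n)}(\epsilon) > \sqrt{d_T/\tau}$ the residual sum is at most $2\tau|\Xc|$: because $n_{t_k}$ is frozen throughout an episode, such a threshold can only be violated at an $x$ whose start-of-episode count is below $\tau$, and the episodic saturation reasoning behind Corollary \ref{cor: rad ep} (visit each $x$ at most $\tau-1$ times, then spend one whole episode there) caps the number of such violations by $2\tau|\Xc|$. To bound $n$ I would use the closed form $g(\sqrt{d_T/N}) = \sqrt{d_T/(N-\tau)}$; writing $\epsilon = \sqrt{d_T/N_1}$ with $N_1 = d_T/\epsilon^2$ gives $g^{(n)}(\epsilon) = \sqrt{d_T/(N_1 - n\tau)}$, so the smallest $n$ with $g^{(n)}(\epsilon) > \sqrt{d_T/\tau}$ obeys $n \le N_1/\tau = d_T/(\tau\epsilon^2)$. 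Assembling the pieces, the whole expression is bounded by $(n+1)\,2\tau|\Xc| \le \left(\frac{d_T}{\tau\epsilon^2}+1\right)2\tau|\Xc|$, as claimed. The main obstacle is the iterated-composition bookkeeping in the second and last steps: one must verify that $h(d_T,\cdot)$ and its inverse $g$ really act as the clean count-shift $N \mapsto N-\tau$ under the reparametrization $\epsilon = \sqrt{d_T/N}$, and that the stopping rule for $n$ is matched to the episodic saturation bound so that the accumulated cost $2n\tau|\Xc|$ and the residual $2\tau|\Xc|$ combine into the stated constant without slack.
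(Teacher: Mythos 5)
Your proposal is correct and takes essentially the same route as the paper's own proof: you reduce radii to counts via $r_{\Fc_k}(x) \le \sqrt{d_T/n_{t_k}(x)}$, iterate Corollary \ref{cor: rad ep} through the inverse map $g$, stop at the smallest $n$ with $g^{(n)}(\epsilon) > \sqrt{d_T/\tau}$, and bound $n \le d_T/(\tau\epsilon^2)$ using $g(\sqrt{d_T/N}) = \sqrt{d_T/(N-\tau)}$, exactly as in the paper. If anything, you spell out two points the paper leaves implicit --- the closed form $g^{(n)}(\epsilon) = \sqrt{d_T/(N_1 - n\tau)}$ and the episodic-saturation reason why the residual sum is at most $2\tau|\Xc|$ --- so no gaps remain.
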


Using these results we are finally able to complete our proof of Theorem \ref{thm: widths}
We first note that, via the triangle inequality $ \sum_{k=1}^M \sum_{i=1}^\tau w_{\mathcal{F}_{k}}(x_{t_k+i}) \le 2 \sum_{k=1}^M \sum_{i=1}^\tau r_{\mathcal{F}_{k}}(x_{t_k+i})$.
We streamline our notation by letting $r_{k,i} = r_{\mathcal{F}_{k}}(x_{t_k+i}) $.
Reordering the sequence $(r_{1,1}, .. , r_{M,\tau}) \rightarrow (r_{i_1}, .. ,r_{i_T})$ such $r_{i_1} \ge .. \ge r_{i_T}$ we have that:
$$ \sum_{k=1}^M \sum_{i=1}^\tau r_{\mathcal{F}_{k}}(x_{t_k+i})  = \sum_{t=1}^T r_{i_t} \le 1 + \sum_{i=1}^T r_{i_t} \Ind \{ r_{i_t} \ge T^{-1} \}. $$
We can see that $r_{i_t} > \epsilon \ge T^{-1} \ \iff \sum_{i=1}^T \Ind \{ r_{i_t} \ge \epsilon\} \ge t$.
From Lemma \ref{lem: large rad} this means that $t \le \left( \frac{d_T}{\tau \epsilon^2} + 1 \right) 2\tau |\Xc|$, so that
$\epsilon \le \sqrt{ \frac{2|\Xc| d_T}{t- 2\tau |\Xc|} }$.
This means that $r_{i_t} \le \min \{ C_\Fc , \sqrt{ \frac{2|\Xc| d_T}{t- 2\tau |\Xc|} } \}$.
Therefore,
\begin{eqnarray*}
	\sum_{i=1}^T r_{i_t} \Ind \{ r_{i_t} \ge T^{-1} \}
	&\le& 2\tau C_\Fc |\Xc| + \sum_{t=2\tau |\Xc| +1}^T \sqrt{\frac{2d_T |\Xc|}{t - \tau |\Xc|}} \\
	&\le& 2\tau C_\Fc |\Xc| +  \int_0^T \sqrt{ \frac{2d_T |\Xc|}{t} } \,dt \\
	&\le& 2\tau C_\Fc |\Xc| + 2 \sqrt{2d_T |\Xc| T}
\end{eqnarray*}
Which completes the proof of Theorem \ref{thm: widths}.

\section{Clean bounds for the symmetric problem}
\label{sec: Clean symmetric bounds}
We now provide concrete clean upper bounds for Theorems \ref{thm: reg PSRL} and \ref{thm: reg UCRL-Factored} in the simple symmetric case
$l+1=m$, $C=\sigma=1$, $| \Sc_i | = |\Xc_i | = K$ and $|Z^R_i| = |Z^P_i | = \zeta$ for all suitable $i$ and write $J = K^\zeta$.
For a non-trivial problem setting we assume that $K \ge 2$, $m\ge2$, $\tau \ge 2$.

From Section \ref{sec: bounds} we have that
\begin{eqnarray*}
	\Exp \left[\mathrm{Regret}(T, \pi^{\rm PS}_{\tau}, M^*) \right] &\le& 4 +2 \sqrt{T} +
		m \left\{ 4(\tau J + 1) + 4\sqrt{8 \log(4mJ T^2/\tau) J T} \right\} \\
		&&+ \  \Exp[\Psi]\left(1 + \frac{4}{T-4}\right) m \left\{ 4(\tau J + 1) + 4\sqrt{8 K \log(4mJ T^2 / \tau) J T} \right\}
\end{eqnarray*}
Through looking at the constant term we know that the bounds are trivially satisfied for all $T\le 56$, from here we can certainly upper bound $4/(T-4) \le 1/13$.
From here we can say that:
\begin{eqnarray*}
	\Exp \left[\mathrm{Regret}(T, \pi^{\rm PS}_{\tau}, M^*) \right] &\le& \left\{ 4 + 4m\left(1 + \frac{14}{13} \Exp[\Psi]\right)(\tau J +1) \right\} \\
		&& + \sqrt{T} \left\{ 2 + 4\sqrt{8J \log(4mJ T^2/\tau)} + 4\sqrt{8 JK \log(4mJ T^2/\tau)} \frac{14}{13} \Exp[\Psi] \right\} \\
	&\le& 5 \left( 1 + \Exp[\Psi] \right) m \tau J + \sqrt{T} \left\{ 12\sqrt{J\log(2mJ T)} + 12\Exp[\Psi] \sqrt{JK\log(2mJ T)} \right\} \\
	&\le& 5 \left( 1 + \Exp[\Psi] \right) m \tau J + 12m\left( 1 + \Exp[ \Psi] \sqrt{K} \right) \sqrt{J T\log(2mJ T)} \\
	&\le& \min(5 m \tau^2 J, T) + 12m \tau \sqrt{J K T \log(2mJ T)} \\
	&\le& 15m \tau \sqrt{J K T \log(2mJ T)}
\end{eqnarray*}
Where in the last steps we have used that $\Psi \le \tau$ and $\min(a,b) \le \sqrt{ab}$.
We now repeat a similar procedure of upper bounds for UCRL-Factored, immediately replicating $D$ by $\tau$ in our analysis to say that with probability $\ge 1- 3\delta$:
\begin{eqnarray*}
	\mathrm{Regret}(T, \pi^{\rm UC}_{\tau}, M^*) &\le& \tau \sqrt{2T \log(2/\delta)} + 2\sqrt{T} +
		m \left\{ 4(\tau J + 1) + 4\sqrt{8 \log(4mJ T/\delta) J T} \right\} \\
		&&+ \  \tau m \left\{ 4(\tau J + 1) + 4\sqrt{8 K \log(4mJ T / \delta) J T} \right\}\\
		&\le& (1+\tau)m4(\tau J + 1) + \\
		  &&\sqrt{T} \left\{ \tau \sqrt{2 \log(2/\delta)} + 2 +  m4\sqrt{8 \log(4mJ T / \delta) J} + \tau m 4 \sqrt{8 \log(4mJ T / \delta) JK} \right\} \\
		&\le& 5(1+\tau)m\tau J + 12m(1 + \tau \sqrt{K}) \sqrt{J T \log(4mJT / \delta)} \\
		&\le& 15m \tau \sqrt{J K T \log(4mJ T / \delta)}
\end{eqnarray*}
Where in the last step we used a similar argument


\end{document}